%% file: 0Stream3Dv2.tex
\documentclass[lettersize,journal]{IEEEtran}
\usepackage{amsmath}
\usepackage{amsthm}
\usepackage{amsfonts}
\usepackage{amssymb}
\usepackage{algorithmic}
\usepackage[hyphens]{url}
\usepackage{array}
\usepackage[format=plain,labelformat=simple,labelsep=period,font=small,compatibility=false]{caption}
\usepackage[font=footnotesize,skip=3pt,subrefformat=parens]{subcaption}
\usepackage[numbers,sort&compress]{natbib}
\usepackage{textcomp}
\usepackage{stfloats}
\usepackage{graphicx}
\usepackage{soul}
\usepackage{booktabs}
\usepackage{overpic}
\usepackage{bbm}
\usepackage[accsupp]{axessibility}
\usepackage{multirow}
\usepackage{pifont}
\usepackage{threeparttable}
\usepackage{colortbl}
\usepackage[ruled,vlined]{algorithm2e}
\usepackage{algorithmic}
\usepackage{hyperref}

\newtheorem{theorem}{Theorem}
\newtheorem{lemma}{Lemma}

\usepackage{caption}
\newcommand{\eg}{{\em e.g.,~}}           % e.g.
\newcommand{\ie}{{\em i.e.,~}}           % i.e.
\newcommand{\etc}{{\em etc.~}}           % etc.  

\begin{document}

% \title{Streaming Zero-Shot 3D Scene Understanding with Local Multi-View Geometric-Semantic Fusion and Local-Historical Point Cloud Manifold Propagation}
% \title{Stream3Dv2: A Comprehensive Geometric-Semantic Fusion Framework for Open-Vocabulary Streaming Zero-Shot 3D Scene Understanding}
\title{Stream3Dv2: Geometric-Semantic Fusion Enhanced Streaming Zero-Shot 3D Scene Understanding}

\author{
Jie Xu,
Na Zhao
\IEEEcompsocitemizethanks{
\IEEEcompsocthanksitem Jie Xu and Na Zhao (\{jie\_xu2,~na\_zhao\}@sutd.edu.sg) are with the ISTD Pillar, Singapore University of Technology and Design, Singapore 487372.
\IEEEcompsocthanksitem This research was supported by the Ministry of Education, Singapore, under its MOE Academic Research Fund Tier 2 (MOE-T2EP20124-0013).
}
% \thanks{Manuscript received April 19, 2021; revised August 16, 2021.}
}

% The paper headers
% \markboth{Journal of \LaTeX\ Class Files,~Vol.~14, No.~8, August~2021}%
% {Shell \MakeLowercase{\textit{et al.}}: A Sample Article Using IEEEtran.cls for IEEE Journals}

% \IEEEpubid{0000--0000/00\$00.00~\copyright~2021 IEEE}
% Remember, if you use this you must call \IEEEpubidadjcol in the second
% column for its text to clear the IEEEpubid mark.

\maketitle

\begin{abstract}
Recently, open-vocabulary zero-shot 3D scene understanding using vision foundation models has emerged as a promising alternative to data-intensive supervised methods. However, deploying these models in real-world scenarios is severely hindered by their inability to efficiently handle streaming RGB-D inputs and their inherent vulnerability to noise 2D segmentation masks. To address these critical limitations, we propose Stream3Dv2, a novel training-free framework designed for robust streaming 3D perception. Stream3Dv2 processes sequential data through an original nested local-to-historical architecture, capturing multi-view consistency while circumventing the high computational overhead so as to support timely responses. At its core, we introduce a comprehensive geometric-semantic fusion mechanism that resolves geometric noise and semantic ambiguity by explicitly utilizing semantic guidance and formulating 3D segmentation as solving point-and-set merging and partitioning problems. Furthermore, we present an innovative manifold-distance-based point cloud refinement strategy. This approach leverages local manifold graphs for point-to-manifold optimization that mitigates the boundary delineation failures caused by Euclidean-distance metrics, and employs geometric bounding boxes to dynamically activate and update historical instances for achieving rapid manifold-to-manifold refinement. Extensive experiments on public datasets demonstrate that Stream3Dv2 consistently outperforms existing baselines in foundational open-vocabulary streaming 3D segmentation and detection. Finally, we show that integrating our framework with an LLM-based agent enables advanced language-driven 3D scene understanding, underscoring its potential for open-world embodied intelligence. Code will be updated at \url{https://github.com/SubmissionsIn/Stream3D}.
\end{abstract}

\begin{IEEEkeywords}
3D scene understanding, streaming 3D segmentation, open-vocabulary, zero-shot, multi-view 3D perception.
\end{IEEEkeywords}

\section{Introduction}\label{sec:intro}
\begin{figure*}[t]
\centering
\includegraphics[width=1\linewidth]{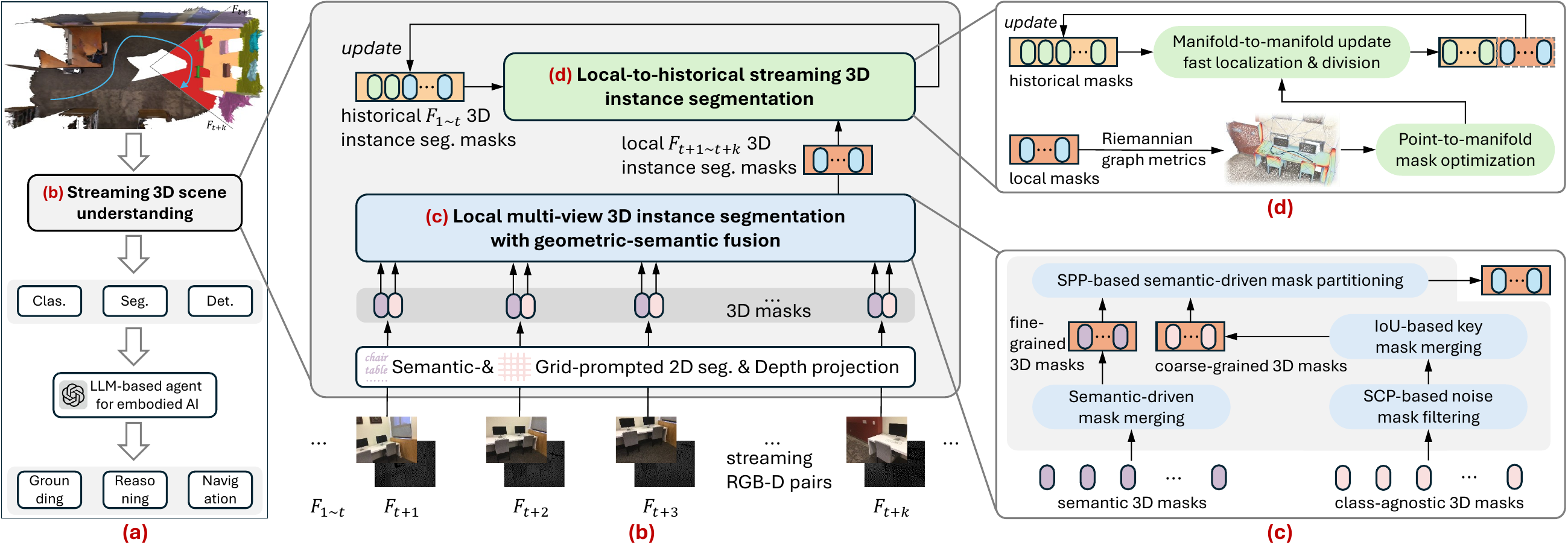}
\caption{\textbf{Overview of Stream3Dv2}: (a) Streaming 3D scene understanding for embodied AI applications. (b) We leverage semantic\&grid-prompted 2D segmentation and integrate \textit{local multi-view} and \textit{local-to-historical} modules to enable streaming zero-shot 3D instance segmentation. Specifically, (c) \emph{local multi-view} module conducts class-agnostic noise mask filtering and semantic-driven mask merging\&partitioning with set covering problem (SCP) and set partitioning problem (SPP) based strategies. Local geometric-semantic fusion produces robust 3D semantic segmentation. Then, (d) \emph{local-to-historical} module refines 3D masks' manifolds by assigning point-to-manifold on Riemannian graph metrics, followed by the detection-based efficient 3D instance localization\&division to achieve manifold-to-manifold 3D mask updates.
}\label{fig:pipeline}
\end{figure*}

\IEEEPARstart{S}{treaming} 3D scene understanding is a fundamental step for 3D computer vision~\cite{geiger20133d,schult2023mask3d,lei2023recent} and embodied intelligence~\cite{seita2023toolflownet,kong2025multi,wang2026affordbot}, aiming to sequentially identify discriminative instances by semantic information across a specific 3D scene, as displayed in Fig.~\ref{fig:pipeline}(a). To produce this, existing approaches predominantly rely on employing training-based deep models~\cite{qi2017pointnet,zhao2021point} which take point clouds as input and are optimized on carefully curated datasets, \eg ScanNet~\cite{dai2017scannet}, MatterPort3D~\cite{Matterport3D}. Despite progress, these methods depend heavily on supervised learning using paired point clouds and human-annotated labels. Consequently, their generalization ability is constrained by costly and limited real-world data annotations, often leading to degraded performance when encountering unseen instances or scenes with domain gaps~\cite{Huynh2021OpenVocabularyIS,Vibashan2023MaskFreeOO}.

Recently, with the rapid advancement of Vision Foundation Model (VFM)~\cite{radford2021learning,kirillov2023segment,ravisam} pre-trained on massive scales of 2D images, \emph{training-free zero-shot} 3D scene understanding has emerged and been attracting significant attention from the research community~\cite{takmaz2023openmask3d,nguyen2024open3dis,yan2024maskclustering,hsu2025openm3d,Zhao_2026_CVPR,lemeshko2026zoo3d}. For instance, early methods~\cite{yang2023sam3d,Lu2023OVIR3DO3} leverage the powerful 2D segmentation model SAM~\cite{kirillov2023segment} to generate 2D instance masks which are subsequently lifted onto point clouds to achieve 3D perception. This line of 2D image segmentation-based methods enables class-agnostic 3D segmentation in a training-free manner, effectively circumventing the challenge of training robust models on scarce 3D scene data. Concurrently, other efforts~\cite{takmaz2023openmask3d,hsu2025openm3d} exploit the \emph{open-vocabulary} capabilities of the vision-language model CLIP~\cite{zhu2024survey,radford2021learning} to perform open-vocabulary 3D semantic segmentation and object detection. As a result, the primary challenge in 3D scene understanding has shifted toward projecting multi-view 2D images into a joint 3D space and effectively establishing robust instance-level merging strategies, inspiring the design of numerous novel VFM-based approaches~\cite{yin2024sai3d,guo2024sam,wang2025open,wang2025voxelflow,nguyen2025open,Zhao_2026_CVPR,lemeshko2026zoo3d}.

Despite their notable progress in 2D VFM-based 3D scene understanding, these methods still face two critical challenges. \\
1) The first challenge is \emph{streaming perception}. In practical applications, algorithms typically need to incrementally process frame-by-frame RGB-D inputs to generate dynamic 3D instance information for online usage, without prior access to the full scene~\cite{bogoslavskyi2016fast,grinvald2019volumetric,menini2021real}. However, most existing zero-shot methods operate in an offline, full-sequence setting, requiring the global scene and all historical frames to produce segmentation or detection results~\cite{yang2023sam3d,yan2024maskclustering,hsu2025openm3d,lemeshko2026zoo3d}, rendering them unsuitable for real-world sequentially-arriving data. Intuitively, streaming perception could be achieved by repeatedly running a full-sequence method on each newly arrived frame alongside historical frames. However, such approaches suffer from high computational complexity and fail to provide timely responses, as the scale of the reconstructed point cloud and the model inference cost grow rapidly with the number of frames. \\
2) The second challenge is \emph{robustness against noise masks}. Because zero-shot 3D perception relies on 2D models to generate segmentation masks that are then projected into 3D space, inherent task differences between 2D and 3D segmentation, coupled with mismatches between 2D masks and 3D depth, inevitably produce noise 3D masks~\cite{lei2023recent,Xu_2026_CVPR}. In streaming segmentation tasks, \eg these noise 3D masks come from limited views that cannot be refined from a global perspective. Consequently, inaccurate intermediate results cumulatively degrade the subsequent 3D perception process. While some efforts have explored 2D VFM-based streaming 3D segmentation~\cite{tie20242,yamazaki2024open,tang2025onlineanyseg,xuembodiedsam,wang2025online}, these methods either still require labeled data to train 3D models or fail to adequately mitigate the negative impact of noise masks when utilizing 2D VFMs.

In this paper, we propose Stream3Dv2 to directly address these limitations, as illustrated in Fig.~\ref{fig:pipeline}(b-d). Stream3Dv2 performs streaming 3D perception through a nested local to historical framework, providing robust segmentation results for downstream 3D scene understanding tasks. Specifically, when utilizing VFMs for 2D segmentation, we observe that masks generated by grid prompts~\cite{ravisam} are class-agnostic and exhibit high scene coverage, but are often coarse and noisy. Conversely, masks generated by semantic prompts~\cite{carion2026sam} possess precise boundaries and explicit class information, but typically suffer from low scene coverage and semantic ambiguity. Simply combining these two types of masks via intersection can lead to severe errors and fails to exploit their complementary strengths. To this end, we propose a novel local multi-view geometric-semantic fusion segmentation method. This approach focuses on achieving robustness against noise 3D masks and semantic ambiguity within a local multi-view window by formulating the 3D segmentation task as point-and-set merging and partitioning problems. Furthermore, we leverage the geometric information of point clouds to enable local 3D mask refinement and efficient global 3D mask updating. Concretely, we construct graph structures based on the manifold metrics of instances, utilizing adjacency relationships in a local non-Euclidean space to perform point-to-manifold mask refinement. The geometric bounding box of each 3D mask is then employed to dynamically activate static, dynamic, and newly detected instances within the historical mask pool, facilitating rapid manifold-to-manifold mask updates.

\noindent The primary contributions of this work are as follows:
\begin{itemize}
    \item We propose a novel streaming 3D perception framework (\ie Stream3Dv2), where the core instance-level segmentation task is decomposed into nested local multi-view and local-to-historical sub-tasks. This design enables us to effectively exploit multi-view consistency while maintaining time-efficient streaming perception.
    \item We introduce a comprehensive geometric-semantic fusion method that performs semantic-driven mask merging and partitioning over noisy VFM-based 2D segmentation. This combines mask denoising and fine-\&coarse-grained mask fusion, jointly ensuring robust 3D segmentation.
    \item We design an innovative point cloud refinement strategy. Our point-to-manifold optimization on a local manifold graph overcomes the limitations of Euclidean metrics, while the manifold-to-manifold update dynamically activates historical masks to achieve efficient optimization.
    \item Our framework is zero-shot, robust, and entirely training-free, making it highly scalable for open-world 3D perception. Extensive experiments on public datasets demonstrate that our Stream3Dv2 yields superior foundational zero-shot segmentation and detection performance, and attains advanced language-driven 3D scene understanding ability when it integrates with an LLM-based agent.
\end{itemize}

\textbf{Innovations over Stream3D}~\cite{Xu_2026_CVPR}:
Our preliminary conference work, Stream3D, decomposes streaming zero-shot 3D instance segmentation into a purely geometric local multi-view 3D segmentation phase and a Euclidean-distance-based local-to-historical merging phase. While this design facilitates noise mask suppression and incremental frame updates, it suffers from three fundamental limitations: 
(i) A lack of semantic guidance during local 3D mask merging and refinement. The class-agnostic segmentation and threshold-based merging approaches frequently cause over- and under-segmentation, leading to substantial deviations between an instance's true geometry and its segmented point cloud shape. 
(ii) Stream3D uses the Euclidean metrics to refine the point cloud manifolds, which is uneasy to accurately model the distance lying across non-Euclidean manifold shapes of point clouds, and thus might result in unsatisfactory instance boundary delineation. Such errors are difficult to be corrected and will accumulate progressively during subsequent merging and updating stages.
(iii) A reliance (common among many VFM-based methods including Stream3D) on high-frequency comparisons and similarity computations of soft semantic features in high-dimensional spaces. This requires manually tuned similarity thresholds and struggles to resolve semantic ambiguity effectively.
In contrast, our proposed Stream3Dv2 is manifold-distance-based and explicitly codes hard semantics into a geometric-semantic fusion process, successfully eliminating geometric noise and semantic ambiguity with minimal computational overhead. As demonstrated in Sec.~\ref{Experiments}, these architectural enhancements constitute a fundamental methodological upgrade, allowing Stream3Dv2 to consistently outperform existing baselines. Furthermore, we extend Stream3Dv2 to a broader range of 3D scene understanding tasks, validating its practical applicability.

\section{Related Work}\label{sec:relatedworks}

\subsection{3D Scene Understanding}
3D scene understanding is an attention-getting research area that aims to infer both geometric structures and semantic attributes on 3D scene data (\eg point clouds~\cite{lomenie2004generic}, mesh~\cite{jagannathan2007three}, and volumetric grids~\cite{grinvald2019volumetric}).
Early methods heavily rely on hand-crafted geometric features combined with traditional machine learning techniques like point clustering and graph node optimization~\cite{reitberger20093d,douillard2011segmentation}.
Driven by deep learning, numerous works have integrated fast-growing neural network models to extract 3D scene features, remarkably enhancing spatial understanding capability.
The popular neural network architectures include point-based~\cite{qi2017pointnet}, voxel-based~\cite{tchapmi2017segcloud}, transformer-based~\cite{zhao2021point}, and multimodal models~\cite{hou20193d}.
They are trained by learning objectives to support specific 3D scene understanding tasks including classification, segmentation, detection, \etc

However, previous 3D scene understanding methods are typically based on supervised learning which relies on dense manual annotations and closed-set categorical assumptions. 
Consequently, they suffer from poor generalization performance when encountering unseen classes or new scenes~\cite{yan2024maskclustering,hsu2025openm3d}. 

\subsection{Open-Vocabulary Zero-Shot 3D Perception and Interaction}
Fortunately, achieving zero-shot 3D scene understanding has become possible in recent years, due to the progress of vision foundation models~\cite{radford2021learning,kirillov2023segment}.
To be specific, recent approaches~\cite{yang2023sam3d,yan2024maskclustering,lemeshko2026zoo3d} leverage the off-the-shelf 2D image segmentation models (\eg SAM~\cite{kirillov2023segment} and Cropformer~\cite{qi2023high}) to facilitate zero-shot 3D instance segmentation and object detection.
Meanwhile, the open-vocabulary query approach in CLIP~\cite{radford2021learning} enables existing 3D perception methods to achieve open-vocabulary semantic annotation and no longer rely on training a fixed classification network.
This demonstrates the potential of transferring semantic knowledge from 2D to 3D and inspires more and more methods in embodied AI~\cite{nguyen2024open3dis,yin2024sai3d,guo2024sam}.
Such open-vocabulary zero-shot 3D framework serves not merely as a perception task, but as a foundational primitive that bridges unstructured 3D geometry with human-level embodied interaction, where converting point clouds into object-centric representation units is necessary for language-directed manipulation, visual grounding, and spatial reasoning~\cite{wang2026affordbot,jiang2026taven}.

Despite these advances, critical bottlenecks remain unresolved.
First, cross-modal discrepancies between 2D vision and 3D geometry, coupled with sensor noise and semantic differences across views, frequently induce noise 3D masks with geometric inaccuracies.
Second, most zero-shot 3D perception methods rely heavily on offline global scene information, creating a significant barrier to real-time online deployment where agents need to process streaming RGB-D inputs. 

\subsection{Streaming 3D Instance Segmentation}
It is essential for real-world applications that 3D segmentation algorithms should incrementally process incoming RGB-D frames and produce near-real-time dynamic segmentation results~\cite{tateno2015real,mccormac2018fusion}.
To this end, many works~\cite{narita2019panopticfusion,liu2022ins} leverage the well-trained point cloud models to extract point cloud features and design fast feature aggregation methods for online 3D instance segmentation.
These methods can produce timely responses in streaming 3D instance segmentation tasks, but their performance heavily relies on the models trained on specific datasets, and the good performance is difficult to generalize to unknown instances and complex scenes~\cite{ngo2023isbnet,weder2025alster}.

To employ the powerful ability of 2D vision foundation models while considering fast processing, recent pipelines~\cite{tie20242,yamazaki2024open,xuembodiedsam} train lightweight networks built on 2D segmentation results to achieve streaming 3D mask merging. Motivated by zero-shot 3D segmentation~\cite{yang2023sam3d,yan2024maskclustering}, some methods~\cite{tang2025onlineanyseg,du2026moonseg3r} are further proposed to provide training-free streaming 3D segmentation. Despite their significant progress, prior VFM-based methods either still need labeled data for training in streaming 3D segmentation or do not fully consider to address the negative effects of noise masks in training-free manner.

\section{Method}
\subsection{Stream3Dv2: Geometric-Semantic Fusion Framework for Open-Vocabulary Streaming Zero-Shot 3D Segmentation}

We begin by formulating the full-sequence zero-shot 3D instance segmentation (FZ3DS) and then define its streaming counterpart, followed by our alternative paradigm.

\textbf{Preliminary.}
In the FZ3DS paradigm, the input consists of a sequence of RGB images $\{I^t\in \mathbb{R}^{H\times W\times 3}\}_{t=1}^T$ along with a globally reconstructed point cloud $P\in \mathbb{R}^{N\times 3}$ obtained from the corresponding depth maps and camera poses across all $T$ frames. A pre-trained 2D segmentation model (\eg Cropformer \cite{qi2023high} or SAM series~\cite{kirillov2023segment,ravisam}) is applied to each image to produce 2D masks which are subsequently back-projected into the 3D space. This yields a collection of per-frame 3D instance masks $\{M^t = \{m_s^t\}_{s=1}^{s_t}\}_{t=1}^T$, where $m_s^t$ denotes the index set of points in $P$ assigned to the $s$-th mask at frame $t$. Existing approaches~\cite{yang2023sam3d,Lu2023OVIR3DO3,yan2024maskclustering} then merge these partial masks across frames to form the final class-agnostic instance segmentation masks, denoted as $M^{1:T}$, and the FZ3DS pipeline is:
\begin{equation}\label{FZ3DS}
    M^{1:T} = \mathcal{F}_{\text{FZ3DS}}\left(\{I^t,M^t\}_{t=1}^T;P\right).
\end{equation}
Then, the entire FZ3DS leverages a vision-language model like CLIP \cite{radford2021learning} for open-vocabulary semantic labeling.

By contrast, streaming zero-shot 3D instance segmentation (SZ3DS) imposes a more stringent condition, \ie at each frame $t$, the model has access only to the current RGB-D pair and must incrementally update the segmentation results for the partially observed scene in a frame-by-frame fashion. The process is defined recursively as follows:
\begin{equation}\label{SZ3DS}
    M^{1:t} = \mathcal{F}_{\text{SZ3DS}}\left(I^t;M^t;P^t \mid M^{1:t-1};P^{1:t-1}\right),
\end{equation}
where $P^{1:t-1}$ represents the accumulated point clouds from the first $t-1$ frames, $P^t$ is the local point clouds reconstructed from the $t$-th frame, $M^{1:t-1}$ are the historical masks from previous frames, and $M^{1:t}$ denotes the dynamically updated masks after incorporating frame $t$. As $t$ increases, the 3D scene segmentation grows incrementally. Compared to the full-sequence formulation in Eq.~(\ref{FZ3DS}), Eq.~(\ref{SZ3DS}) captures a dynamic, sequential setting, yet it introduces two major challenges, including the sensitivity to noise masks and high computational latency, as discussed in Sec.~\ref{sec:intro}.

\textbf{Stream3Dv2 framework.}
To address these challenges in SZ3DS, we propose a novel Stream3Dv2 paradigm which decomposes $\mathcal{F}_{\text{SZ3DS}}$ into two nested sub-tasks, including local multi-view 3D segmentation $\mathcal{F}_{\text{Loc}}$ and local-to-historical 3D segmentation $\mathcal{F}_{\text{Loc2his}}$. The paradigm is formulated as follows:
\begin{equation}\label{eq:stream3dpp}
\begin{aligned}
    & M^{1:t} = \mathcal{F}_{\text{Loc2his}}\Bigl(M^{(t-k+1):t} \mid M^{1:t-k}; P^{1:t}\Bigr) \\
    & \text{s.t.}~M^{(t-k+1):t} = \mathcal{F}_{\text{Loc}}\Bigl(\left\{I^i;G^i;[S^i,L^i];P^i\right\}_{i=t-k+1}^{t}\Bigr),
\end{aligned}
\end{equation}
where \(k\) is the sliding window size retaining the most recent \(k\) frames.
For each frame \(i\), \(G^i\) denotes the class‑agnostic 3D masks of all instances obtained by grid-prompted segmentation with vision foundation models~\cite{ravisam},
and \(S^i\) is the semantic 3D masks obtained by semantic-prompted segmentation with text prompts \(L^i\) from common scene classes~\cite{carion2026sam}.
There is a one-to-one correspondence between the elements of set \(S^i\) and set \(L^i\), \eg a 3D point cloud mask \(m_s^i \in S^i\) corresponds to a textual word \(desk \in L^i\): \([m_s^i,desk] \in [S^i,L^i]\).
$\mathcal{F}_{\text{Loc}}$ achieves the robust 3D instance segmentation for the local scene, and then $\mathcal{F}_{\text{Loc2his}}$ merges the results into the seen historical scene.
In this way, our Stream3Dv2 paradigm can not only explore the geometric-semantic complementarity and multi-view consistency to improve robustness against noise 3D masks, but also avoid involving the irrelevant views in the scene thus to meet the timely response for SZ3DS.

In the next subsections, we will introduce our specific methodology for the local multi-view 3D segmentation $\mathcal{F}_{\text{Loc}}$ (in Sec.~\ref{mv},\ref{fg}, and \ref{mf}) as well as the local-to-historical streaming 3D segmentation $\mathcal{F}_{\text{Loc2his}}$ (in Sec.~\ref{l2h}).

\subsection{Local Multi-View Coarse-Grained 3D Segmentation}\label{mv}
\begin{figure}[t]
\centering
\includegraphics[width=1\linewidth]{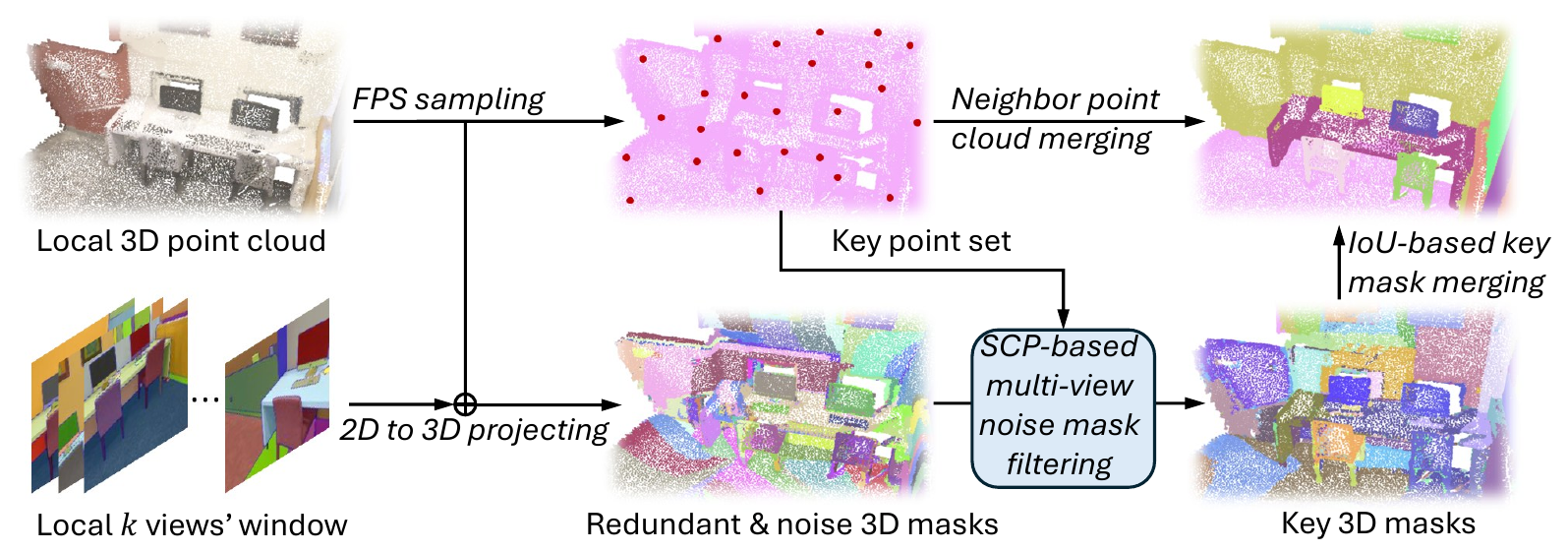}
\caption{
Local multi-view coarse-grained 3D segmentation.
}\label{fig:scp}
\end{figure}
Grid-prompted segmentation~\cite{ravisam} on images with vision foundation models is conducive to rapid and thorough 2D segmentation for possible instances, \ie the instances in each frame are fully covered.
However, the obtained results are class-agnostic and often accompanied by over-segmentation masks.
To effectively utilize the advantages of grid-prompted segmentation, we propose the following \emph{Geometry-based single-view mask denoising}, \emph{SCP-based multi-view noise mask filtering}, and \emph{IoU-based key mask merging}, which focus on obtaining preliminary coarse-grained 3D segmentation from noise and over-segmentation masks as shown in Fig.~\ref{fig:scp}.

\textbf{Geometry-based single-view mask denoising.}
By intuition, the point cloud of a single instance should form a contiguous manifold, while the boundaries between different instances ought to be clearly separated to prevent merging errors.
Inspired by intra-cluster compactness and inter-cluster separability~\cite{souvenir2005manifold,Xu_2025_ICCV}, we regard each scene instance as a cluster in the 3D space and enforce two distance constraints to purify every 3D mask.
Formally, letting ${m}_a^t,{m}_b^t \in G^i$ be the point index sets of any two masks in the $t$-th frame, we impose
\begin{equation}\label{dis}
\begin{aligned}
\begin{cases}
    \forall i \in {m}_a^t,\exists j \in {m}_a^t&\vert~~~ dis(p_{(i)},p_{(j)}) < \delta, \\
    \forall i \in {m}_a^t,\forall j \in {m}_b^t&\vert~~~ dis(p_{(i)},p_{(j)}) > \delta, 
\end{cases}
\end{aligned}
\end{equation}
where $p_{(i)}, p_{(j)} \in \mathbb{R}^3$ are the coordinates of points $i$ and $j$, $dis(\cdot,\cdot)$ denotes the Euclidean distance, and $\delta$ is a threshold defined in the point cloud space.
In Eq.~(\ref{dis}), the first condition enforces \emph{intra-instance continuity}, ensuring that each mask corresponds to a spatially coherent manifold and does not cover points from non-adjacent instances. 
The second condition imposes \emph{inter-instance separation}, which suppresses boundary noise in the point cloud stemming from projection inaccuracies.
Points that violate the above constraints are discarded to conduct mask denoising within each view.

\textbf{SCP-based multi-view noise mask filtering.}
We make a basic observation that as the number of local frames increases, more 2D masks are generated by the 2D segmentation model, which in turn results in more 3D masks. However, these 3D masks are often redundant, incomplete, or noisy, which can significantly degrade the segmentation performance.
To address this, we first define key points and then design a set-covering-problem based multi-view noise mask filtering strategy, figuring out key masks among local frames for noise-robust local multi-view 3D instance segmentation.

Specifically, for $k$ frames observed from multiple views of a local scene, the corresponding point cloud is $P^{(t-k+1):t}=\bigcup_{i=(t-k+1)}^tP^i \in \mathbb{R}^{N_{t} \times 3}$.
We leverage the farthest point sampling (FPS)~\cite{eldar1997farthest} with a key point rate $\gamma=0.05$ on $P^{(t-k+1):t}$ to obtain the key points $\mathcal{O}^{(t-k+1):t} \in \mathbb{R}^{n_t \times 3}, n_t < N_{t}$,
and let $\mathcal{K}^{(t-k+1):t}$ denote the set of indices for these key points.
These key points are uniformly distributed in the local point cloud and help us to define the key masks. 
Next, we define $\mathcal{C}^{(t-k+1):t}$ as the union set of all 3D class-agnostic masks across $k$ frames, \ie $\mathcal{C}^{(t-k+1):t} = \bigcup_{i=t-k+1}^t G^i = \bigcup_{i=t-k+1}^t\{m_1^i, m_2^i, \dots, m_{s_i}^i\}$,
and then solve a set covering problem to select the key mask set $\mathcal{M}$ as follows:
\begin{equation}\label{scp}
\begin{aligned}
&\arg\min_{\mathcal{M} \subseteq \mathcal{C}^{(t-k+1):t}} |\mathcal{M}| \\ & s.t.~\forall p \in \mathcal{K}^{(t-k+1):t}, \exists m \in \mathcal{M}~\vert~p \in m,
\end{aligned}
\end{equation}
where $m$ denotes a key mask in $\mathcal{M}$ and $p$ is the index of a key point from $\mathcal{K}^{(t-k+1):t}$.
The objective of Eq.~(\ref{scp}) is to select the smallest number of 3D segmentation masks in the local scene such that these key masks fully cover the key points.
This is conducive to filtering out noise masks and reducing the chances of interference across multiple views.

\textbf{IoU-based key mask merging.}
Given the set of key masks $\mathcal{M}$, they share the consistency and complementarity among multiple views of the same instance, which motivates us to conduct key mask merging to obtain coarse-grained masks.

Specifically, we use ${A} \in \{0,1\}^{n \times n}, n = |\mathcal{M}|$ to represent the connectivity matrix among all key masks.
For any two key masks $m_i, m_j \in \mathcal{M}$, they should be connected (\ie ${A}_{ij} = 1$) if their Intersection over Union (IoU) exceeds a threshold (\eg overlap threshold $\alpha = 0.2$), otherwise ${A}_{ij} = 0$.
This point-wise intersection consistency allows us to identify masks belonging to the same instance.
The number of connected components in ${A}$ implies the number of instances that should exist in the local $t$ frames.
The point union complementarity of multiple masks from each connected component can assist us to obtain complete 3D masks.
We define the reachability matrix ${R} = {A} \lor {A}^2 \lor \dots \lor{A}^n$ and merge key masks as follows:
\begin{equation}
    m_i = \bigcup_{j} m_j \quad s.t.~m_i, m_j \in \mathcal{M},{R}_{ij} = 1. 
\end{equation}
Then, for the local window's $t$ frames, the merged key masks are the coarse-grained, class-agnostic 3D masks $G^{(t-k+1):t}$.

\subsection{Semantic-Driven Fine-Grained 3D Segmentation}\label{fg}
Meanwhile, for the local $k$ frames, we conduct the semantic-prompted segmentation~\cite{carion2026sam} on each image, using the common scene classes as prompts to obtain semantic masks $\{S^i\}_{i=t-k+1}^{t}$.
Since each point in semantic masks has the class label, we directly count the most frequent labels for each point to mitigate semantic ambiguity, achieving the de-duplication of semantic masks and voting out 3D semantic segmentation masks $S^{(t-k+1):t}$.
Semantic-prompted segmentation is good at obtaining masks with explicit class information, but is limited by the prompt words and thus has difficulty achieving full-instance segmentation.
To combine the advantages of semantic- and grid-prompted segmentation, we propose \emph{Set-Partitioning-Problem (SPP) based semantic-driven mask partitioning}, which leverages semantic 3D masks $S^{(t-k+1):t}$ to further refine the ready coarse-grained 3D masks $G^{(t-k+1):t}$ to obtain accurate fine-grained segmentation, as shown in Fig.~\ref{fig:sdm}.

Concretely, denote the local window frames as $\Delta=(t-k+1):t$, the coarse-grained masks are ${G}^\Delta=\{m_g^\Delta\}_{g=1}^{g_\Delta}$, and the semantic masks are ${S}^\Delta=\{m_s^\Delta\}_{s=1}^{s_\Delta}$.
To ensure that the fine-grained semantic masks are not covered by themselves, we prioritize retaining the smaller masks and remove duplicate points from the larger ones, resulting in mutually disjoint semantic masks, \ie $m_s^\Delta\cap m_{s'}^\Delta=\emptyset$ for $s\neq s'$.
Then, for each semantic mask $m_s^\Delta$ and coarse-grained mask $m_g^\Delta$, we adopt the same concept of mask overlapping to define the inclusion degree as $\rho_{s,g}=|m_s^\Delta\cap m_g^\Delta|\,/\,|m_s^\Delta|$.
A semantic mask should be fused to the coarse-grained mask with the largest inclusion degree, controlled by the overlap threshold $\alpha\in[0,1]$:
$g^*(s)=\arg\max_g\rho_{s,g}$ subject to $\rho_{s,g^*(s)}\ge1-\alpha$.
$\alpha=0$ requires perfect containment, whereas $\alpha<1$ tolerates boundary leakage, essential for real-world noise masks.
% $\alpha'=1-\alpha$ acts with the similar as the previous overlap threshold $\alpha$.
Moreover, for every fused semantic mask we form its cropped point set $c_s=m_s^\Delta\cap m_{g^*(s)}^\Delta$.
A coarse-grained mask $m_g^\Delta$ might have multiple cropped parts with different semantic masks.
The cropped parts associated with $m_g^\Delta$ are collected in $\mathcal{C}_g=\{c_s\mid g^*(s)=g\}$, which are pairwise disjoint subsets of $m_g^\Delta$ due to the disjoint semantic masks.
This implies that a coarse-grained mask can be decomposed into multiple semantic masks, and we let $S_g=\sum_{c_s\in\mathcal{C}_g}|c_s|$ denote the total point count of decomposed parts.

\begin{figure}[t]
\centering
\includegraphics[width=1\linewidth]{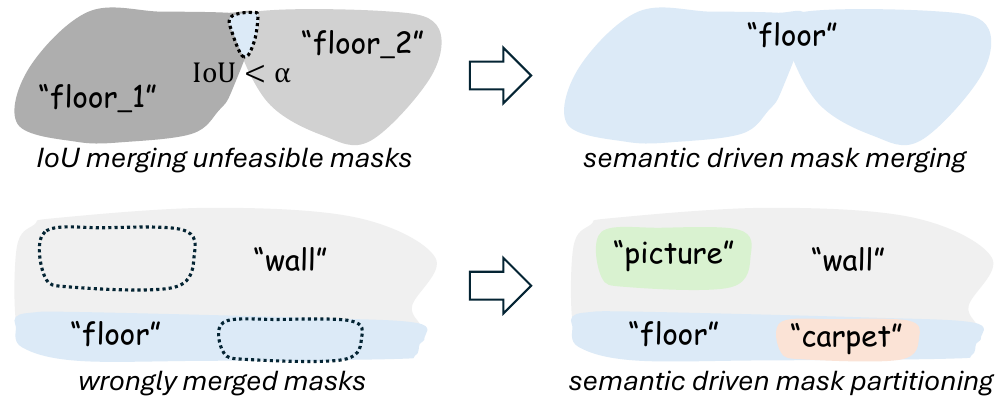}
\caption{3D mask merging and partitioning. \emph{Left:} threshold-based methods suffer from dynamical overlapping or wrongly-merged masks. \emph{Right:} semantic-driven method (our work) is able to make more feasible mask merging and partitioning with semantic information.}\label{fig:sdm}
\end{figure}
Furthermore, we introduce binary variables $\mathbf{z}\in\{0,1\}^{g_\Delta}$ and $\mathbf{w}\in\{0,1\}^{s_\Delta}$, where $z_g=1$ indicates that the coarse-grained mask $m_g^\Delta$ is replaced by the fine parts $\mathcal{C}_g$, while $w_s=1$ retains the original semantic mask $m_s^\Delta$ as a new instance.
In this way, we formulate the semantic-driven mask partitioning as maximizing the following objective $\mathcal{J}$:
\begin{equation}
\begin{aligned}
\max_{\mathbf{z},\mathbf{w}} \mathcal{J}& = \sum_{g=1}^{g_\Delta} \big[ z_g S_g + (1-z_g)|m_g^\Delta| + \lambda z_g |\mathcal{C}_g| \big] + \sum_{s=1}^{s_\Delta} w_s |m_s^\Delta|, \label{decomposition} \\ 
\text{s.t.}\quad &w_s + \sum\nolimits_{g:g^*(s)=g} z_g \le 1,\;\forall s, \\ 
                 &S_g z_g \ge |m_g^\Delta| z_g,\;\forall g, \\ 
                 % &S_g z_g \ge (1-\tau)|m_g^\Delta| z_g,\;\forall g, \\ 
                 &w_s \le \mathbbm{1}\!\big[ m_s^\Delta \cap (\cup_g m_g^\Delta) = \emptyset \big],\;\forall s. 
\end{aligned}
\end{equation}
In Eq.~(\ref{decomposition}), if a coarse-grained mask is decomposed (\eg $z_g=1$), the gain is $S_g$ (\ie the points of cropped parts covered by semantic masks) plus a decomposition-encouraging bonus $\lambda|\mathcal{C}_g|$; otherwise ($z_g=0$), the original $|m_g^\Delta|$ points of the mask are kept.
In the first term, $\lambda$ encourages coarse-grained mask decomposition when semantic masks meet the condition (we set it with a sufficiently large value in experiments).
The second term adds points of semantic masks that become novel instances ($w_s=1$).
The first constraint ensures each semantic mask is used at most once (either in a decomposition or as a novel instance).
The second constraint enforces that any point loss in a decomposed coarse-grained mask is bounded by $|m_g^\Delta|$.
The third constraint guarantees novel instances are completely disjoint from all coarse-grained masks.

After solving the SPP problem, the refined mask set is
\begin{equation}
M^{\Delta} =  \bigcup_{g:z_g=1} \mathcal{C}_g \cup \big\{ m_g^\Delta \mid z_g=0 \big\}\cup \big\{ m_s^\Delta \mid w_s=1 \big\},
\label{eq:D}
\end{equation}
where $\Delta=(t-k+1):t$. As a result, we not only integrate semantic masks to produce more informative 3D instance segmentation, but also can reasonably split the class-agnostic coarse-grained masks based on the fine-grained masks.

\subsection{Point Cloud Manifold Refinement}\label{mf}

The geometric constraint in Eq.~(\ref{dis}), coverage condition in Eq.~(\ref{scp}), and decomposition operation in Eq.~(\ref{decomposition}) primarily take the manner of discarding ambiguous points to conservatively optimize point clouds.
For a local scene, let $\mathcal{U}^{\Delta}$ be the set of points not yet assigned to any mask, and allocating these points to the masks in $M^{\Delta}$ obtains the complete 3D instance segmentation.
Euclidean distance, however, cannot capture the non‑Euclidean surface geometry, leading to incorrect assignments across curved or thin structures.
To this end, we introduce point‑to‑manifold assignment that builds on manifold-aware graph metrics and solves the multi‑source Eikonal equation~\cite{sethian1999level}, motivated by a wavefront propagation model that computes geodesic distances on a curved surface, to achieve physically meaningful manifold refinement.

\textbf{Manifold-aware graph metric.}
Formally, the local scene contains $n^{\Delta}$ points, each identified by an index $i \in \mathcal{I}^{\Delta}$ and a coordinate $p_{(i)} \in \mathbb{R}^3$.
Every mask $m_c \in M^{\Delta}$ is a subset of $\mathcal{I}^{\Delta}$.
Given $M^{\Delta} = \{m_c\}_{c=1}^{C^{\Delta}}$ and the unassigned point set $\mathcal{U}^{\Delta} = \mathcal{I}^{\Delta} \setminus \bigcup_{c=1}^{C^{\Delta}} m_c$,
the goal is to assign each $i \in \mathcal{U}^{\Delta}$ to its manifold‑closest mask in $M^{\Delta}$.
To efficiently represent the surface of point cloud manifold,
we construct a graph $\mathcal{G}=(\mathcal{V},\mathcal{E})$ where node set $\mathcal{V}$ collects the super‑points of grid voxelization in 3D point cloud,
and edge set $\mathcal{E}$ models the accessibility degree between super‑points.
Specifically, each node $v \in \mathcal{V}$ corresponds to a subset $\mathcal{I}_v^{\Delta} \subseteq \mathcal{I}^{\Delta}$, characterized by its centroid $\mathbf{c}_v$ and regularized covariance $\Sigma_v$:
\begin{equation}\label{eq:centcov}
\begin{aligned}
\mathbf{c}_v &= \frac{1}{|\mathcal{I}_v^{\Delta}|} \sum_{i \in \mathcal{I}_v^{\Delta}} p_{(i)}, \\
\Sigma_v     &= \frac{1}{|\mathcal{I}_v^{\Delta}|} \sum_{i \in \mathcal{I}_v^{\Delta}} \bigl(p_{(i)}-\mathbf{c}_v\bigr)\bigl(p_{(i)}-\mathbf{c}_v\bigr)^{\!\top} + \varepsilon \mathbf{I},
\end{aligned}
\end{equation}
where $\varepsilon=10^{-6}$ ensures positive definiteness even for near‑planar or degenerate shapes, thus yielding the Riemannian metric $G_v = \Sigma_v^{-1}$ (when $|\mathcal{I}_v^{\Delta}| < 4$, $G_v = \mathbf{I}$ avoids unreliable covariance estimates).
In graph $\mathcal{G}$, an edge $(u,v)$ is inserted if $\lVert\mathbf{c}_u - \mathbf{c}_v\rVert < \sqrt{3} {(\frac{n_e}{\rho})}^{1/3}$ where $n_e$ is the expected average number of points per pixel (\eg 20) and $\rho$ is the global average density automatically calculated from the point cloud, thereby to capture the local irregular manifold shapes.
The weight of edge is the symmetric Mahalanobis distance:
\begin{equation}
d_{\mathcal{M}}(u,v) = \sqrt{ (\mathbf{c}_u - \mathbf{c}_v)^{\!\top} \left(\frac{G_u + G_v}{2}\right) (\mathbf{c}_u - \mathbf{c}_v) }.
\label{eq:mah}
\end{equation}
Since $G_v$ assigns small weights to tangential displacements and large weights to normal displacements, $d_{\mathcal{M}}$ strongly penalizes off‑surface movement while keeping tangential motion cheap. The shortest paths in $\mathcal{G}$ therefore follow the intrinsic surface geometry, thereby avoiding the failure of Euclidean metric on non-Euclidean point cloud manifold.

\textbf{Point-to-manifold mask optimization.}
The anisotropic Eikonal equation~\cite{sethian1999level} describes a propagating front whose speed depends on the local surface orientation, and its solution gives the minimal travel time by geodesic distance from a set of source points.
For convenience, we treat each mask $m_c$ as a class label $c$.
Then, we instantiate one Eikonal equation for per mask $m_c$ and define a geodesic distance field $\phi_c:\mathcal{V}\to\mathbb{R}^+\cup\{\infty\}$ that records, for every node, the minimal cost along the graph from the sources of class $c$.
A node $v$ acts as a seed (the source where the front of class $c$ originates with zero cost) if its super‑point contains at least one point already labelled as $c$, \ie $\mathcal{I}_v^{\Delta} \cap m_c \neq \emptyset$.
For each $c$, $v \in \mathcal{V}$, $\phi_c$ satisfies the discrete Eikonal equation
\begin{equation}\label{eq:eikonal}
\phi_c(v) = 
\begin{cases}
0, & \mathcal{I}_v^{\Delta} \cap m_c \neq \emptyset,\\[6pt]
\displaystyle\min_{u \in \mathcal{N}(v)} \bigl\{ \phi_c(u) + d_{\mathcal{M}}(v,u) \bigr\}, & \text{otherwise},
\end{cases}
\end{equation}
where $\mathcal{N}(v) = \{u \in \mathcal{V} \mid (u,v) \in \mathcal{E}\}$ are the immediate graph neighbors of $v$.
This formulation casts the point‑to‑manifold assignment as a competition among $C$ geodesic distance fields, \ie each field emanates from its own seeds, and a node is claimed by the mask whose front reaches it first.

\textbf{Iterative Bellman relaxation.}
To find the shortest-path distance of point-to-manifold assignment on the manifold-aware graph, we employ an iterative Bellman relaxation scheme~\cite{golub2013matrix} to solve our point-to-manifold mask optimization.
Specifically, we evolve all $C$ coupled fields simultaneously with multiple iterations, and the iterative solutions progressively propagate the minimum accumulated cost over the super-point graph $\mathcal{G}$.
we set $R\in \mathbb{N}^+$ iterations to make distance information propagate across all connected components on $\mathcal{G}$, encouraging that the minimum-cost information could reach every node within the manifold-aware graph structures.
Then, we initialize $\phi_c^{(0)}(v) = 0$ for seeds of $m_c$ and $+\infty$ otherwise. For $r = 1, \dots, R$, update non‑seed nodes by
\begin{equation}\label{eq:sweep}
\phi_c^{(r)}(v) \leftarrow
\min\Bigl\{\phi_c^{(r-1)}(v),\;
\min_{u \in \mathcal{N}(v)} \bigl\{ \phi_c^{(r-1)}(u) + d_{\mathcal{M}}(v,u) \bigr\}\Bigr\}.
\end{equation}
Node labels are then assigned as
\begin{equation}\label{eq:node_label}
\ell(v) = 
\begin{cases}
\arg\min\limits_{c} \phi_c^{(R)}(v), & \min_c \phi_c^{(R)}(v) < \infty,\\
-1, & \text{otherwise}.
\end{cases}
\end{equation}

After the $R$ iterations, each unassigned point $i \in \mathcal{U}^{\Delta}$ is mapped to its enclosing super‑point $v_i$ (\ie the unique node with $i \in \mathcal{I}_{v_i}^{\Delta}$).
The point inherits the label $\ell(v_i)$ if $\ell(v_i) \neq -1$; otherwise it remains unassigned.
The refined local segmentation mask set is then updated as follows:
\begin{equation}
M^{\Delta} \leftarrow \bigl\{\, m_c \cup \{\, i \in \mathcal{U}^{\Delta} \mid \ell(v_i) = c \,\} \,\bigr\}_{c=1}^{C^{\Delta}} .
\label{eq:refined_mask}
\end{equation}

\subsection{Local-to-Historical Streaming 3D Instance Segmentation}\label{l2h}
Sec.~\ref{mv},~\ref{fg}, and~\ref{mf} established a complete pipeline for the local sub-task $\mathcal{F}_{\text{Loc}}$, \ie joint 3D instance and semantic segmentation within a single scene window.  
We now address the temporal propagation sub-task $\mathcal{F}_{\text{Loc2his}}$ in Eq.~(\ref{eq:stream3dpp}), which achieves manifold-to-manifold update from the freshly extracted local masks $M^{(t-k+1):t}$ (\ie the obtained $M^{\Delta}$) to the historical masks $M^{1:t-k}$ for producing the mask set $M^{1:t}$.

\textbf{3D detection-based fast instance localization.}
To realize streaming 3D instance segmentation, our paradigm iteratively updates the historical mask $M^{1:t-k}$ using the local masks $M^{(t-k+1):t}$ (\ie $M^{\Delta}$).
For computational efficiency, we compute bounding boxes for 3D detection and use them to quickly retrieve the instances that are spatially overlapping, and then determine the masks that need to be updated.

To be specific, for each mask $m$, let $\mathcal{P}(m) = \{ p_i = (x_i, y_i, z_i) \mid i \in m\}$ be the set of 3D points covered by $m$.
Its axis‑aligned bounding box (AABB) is defined by the minimum and maximum coordinate vectors as follows:
\begin{equation}
\begin{aligned}
    & \mathbf{b}(m) = \bigl(\mathbf{p}^{\min}_m,\; \mathbf{p}^{\max}_m\bigr),\\
    & s.t.~\mathbf{p}^{\min}_m = (\min_{p\in\mathcal{P}(m)} x,\; \min_{p\in\mathcal{P}(m)} y,\; \min_{p\in\mathcal{P}(m)} z),\\
    &~~~~~\mathbf{p}^{\max}_m = (\max_{p\in\mathcal{P}(m)} x,\; \max_{p\in\mathcal{P}(m)} y,\; \max_{p\in\mathcal{P}(m)} z).
\end{aligned}
\label{eq:aabb}
\end{equation}
Two masks have possibly spatial consistency only if their AABBs intersect.
Therefore, for a local mask $m_l^{(t-k+1):t}$, $\mathbf{b}(m_l)$, the candidate pool from the historical set $M^{1:t-k}$ is
\begin{equation}
B^{1:t-k}_{(l)} = \bigl\{m_j^{1:t-k} \in M^{1:t-k} \mid \mathbf{p}^{\min}_{m_l} \le \mathbf{p}^{\max}_{m_j};\ \mathbf{p}^{\min}_{m_j} \le \mathbf{p}^{\max}_{m_l}\bigr\}.
\label{eq:candidate_pool}
\end{equation}
In practice, the retrieval is accelerated to constant‑time lookups by pre‑indexing all historical AABBs in a spatial grid.
Since two masks cannot share any point unless their AABBs intersect, subsequent precise point‑level intersection operations for $m_l^{(t-k+1):t}$ are restricted exclusively to masks in $B^{1:t-k}_{(l)}$, while all other historical masks are safely skipped.

\textbf{Local-to-historical instance division and update.}
We define static, dynamic, and newly detected instances for manifold-to-manifold mask update.
Concretely, for each local mask $m_l^{(t-k+1):t} \in M^{(t-k+1):t}$, we split the historical masks in $B^{1:t-k}_{(l)}$ into \emph{static instances} $\mathcal{S}_{l}^{1:t-k}$ and \emph{dynamic instances} $\mathcal{D}_{l}^{1:t-k}$.
A historical mask $m_i^{1:t-k} \in B^{1:t-k}_{(l)}$ is classified as a dynamic instance if it shares at least one point with the local mask; otherwise, it is treated as static instance as follows:
\begin{equation}
\begin{cases}
m_i^{1:t-k} \in \mathcal{D}_l^{1:t-k}, & \text{if } m_l^{(t-k+1):t} \cap m_i^{1:t-k} \neq \emptyset, \\
m_i^{1:t-k} \in \mathcal{S}_l^{1:t-k}, & \text{otherwise}.
\end{cases}
\end{equation}
If $\mathcal{D}_l^{1:t-k} = \emptyset$, the local mask is treated as a \textit{newly detected instance}, \ie $m_l^{(t-k+1):t} \in \mathcal{N}^{(t-k+1):t}$.
For a local mask $m_l^{(t-k+1):t}$ and its associated dynamic set $\mathcal{D}_l^{1:t-k}$, the update rules are formulated as follows:
\begin{equation}\label{update}
\begin{cases}
m_i^{1:t} = m_i^{1:t-k} \cup m_l^{(t-k+1):t}, & \text{if } i = \arg\max_{j}|E_{lj}|, \\
m_i^{1:t} = m_i^{1:t-k} \setminus m_l^{(t-k+1):t}, & \text{else},
\end{cases}
\end{equation}
where $E_{lj} = m_j^{1:t-k} \cap m_l^{(t-k+1):t}$ is the intersection of the two masks and $m_i^{1:t} \in \mathcal{D}_l^{1:t}$ represents the updated mask.

In Eq.~(\ref{update}), the first case merges the local mask into the dynamic instance that exhibits the largest overlap.
The second case guarantees that the update strategy preserves the inter-instance separation constraint formulated in Eq.~(\ref{dis}).
Ultimately, the local-to-historical streaming segmentation result is the union of static, dynamic, and newly detected instances:
\begin{equation}\label{M}
M^{1:t} = \bigcup_{l} \bigl( \mathcal{S}_l^{1:t-k} \cup \mathcal{D}_l^{1:t} \bigr) \cup \mathcal{N}^{(t-k+1):t}.
\end{equation}
In this way, we not only efficiently handle dynamic and streaming scene understanding, but also make effective use of local-view consistency to facilitate robust perception.

\section{Experiments}\label{Experiments}

\begin{table*}[!t]
\caption{\textbf{Open-vocabulary 3D instance segmentation} on ScanNet200, ScanNet++, and MatterPort3D benchmarks, evaluated by Class-agnostic and Semantic segmentation performance.
\textbf{ST} \ding{51}: streaming method. \textbf{ZS} \ding{51}: zero-shot method.
}\label{OV3DIS}
\centering
\renewcommand\tabcolsep{1.0pt}
\resizebox{\linewidth}{!}{
    \begin{tabular}{l|cc|ccc|ccc|ccc|ccc|ccc|ccc}
    \toprule[0.7pt]
    \multirow{3}*{\textbf{Method}} & \multirow{3}*{\textbf{ST}} & \multirow{3}*{\textbf{ZS}} & \multicolumn{6}{c|}{\textbf{ScanNet200}} & \multicolumn{6}{c|}{\textbf{ScanNet++}} & \multicolumn{6}{c}{\textbf{MatterPort3D}}\\ 
    \cline{4-21}
    & & & \multicolumn{3}{c|}{Class-agnostic} & \multicolumn{3}{c|}{Semantic}& \multicolumn{3}{c|}{Class-agnostic} & \multicolumn{3}{c|}{Semantic}& \multicolumn{3}{c|}{Class-agnostic} & \multicolumn{3}{c}{Semantic}\\ 
    \cline{4-21}
          & & & AP & AP$_{50}$ & AP$_{25}$ & AP & AP$_{50}$ & AP$_{25}$ & AP & AP$_{50}$ & AP$_{25}$ & AP & AP$_{50}$ & AP$_{25}$ & AP & AP$_{50}$ & AP$_{25}$ & AP & AP$_{50}$ & AP$_{25}$ \\
          \midrule
          Mask3D~\cite{schult2023mask3d}
          & \ding{55} & \ding{55}
          & 39.7 & 53.6& 62.5 
          & 26.9 & 36.2 & 41.4
          & 22.8 & 33.3 & 45.7 
          & 3.6 & 5.1 & 6.7
          & 4.4 & 9.8 & 20.6
          & 2.5 & 4.5 & 6.7
          \\
          OpenMask3D~\cite{takmaz2023openmask3d} 
          & \ding{55} & \ding{55}
          & 39.7 & 53.6 & 62.5 
          & 15.1& 19.6 & 22.6
          & 22.8 & 33.3 & 45.7
          & 2.0 & 2.7 & 3.4
          & 4.4 & 9.8 & 20.6
          & 4.6 & 8.5 & 13.0
          \\
         OVIR-3D~\cite{Lu2023OVIR3DO3}
         & \ding{55} & \ding{51}
         & 14.4 & 27.5 & 38.8 
         & 9.3 & 18.7& 25.0
         & 19.4 & 34.1 & 46.5
         & 3.6 & 5.7 & 7.3
         & 5.9 & 13.9 &  24.6
         & 6.3 & 16.4 &  24.4
         \\ 
        MaskClustering~\cite{yan2024maskclustering}
        & \ding{55} & \ding{51}
        & 19.7 & 36.4 & 51.4
        & 12.0 & {23.3} & {30.1}
        & 24.6 & {40.3} & 51.5
        & {7.8} & {11.9} & {13.2}
        & 8.3 & 18.9 & 33.4
        & 9.2 & 19.7 & 26.5
        \\ 
        SAI3D~\cite{yin2024sai3d}
        & \ding{55} & \ding{51}
        & 30.8 & 50.5 & 70.6
        & 12.7 & 18.8 & 24.1
        & 17.1 & 31.1 & 49.5
        & --  & -- & --
        & --  & -- & --
        & --  & -- & --
        \\
        SAM-Graph~\cite{guo2024sam}
        & \ding{55} & \ding{55}
        & 22.1 & 41.7 & 62.8
        & --  & -- & --
        & 15.3 & 27.2 & 44.3
        & --  & -- & --
        & --  & -- & --
        & --  & -- & --
        \\
        SAM2Object~\cite{zhao2025sam2object}
        & \ding{55} & \ding{51}
        & 34.0 & 52.7 & 70.3
        & 13.3 & 19.0 & 23.8
        & 20.2 & 34.1 & 48.7
        & --  & -- & --
        & --  & -- & --
        & --  & -- & --
        \\
        VoxelFlow~\cite{wang2025voxelflow}
        & \ding{55} & \ding{51}
        & 21.9 & 39.4 & 55.1
        & 15.0 & 25.5 & 34.2
        & --  & -- & --
        & 6.8 & 11.7 & 15.6
        & --  & -- & --
        & --  & -- & --
        \\
        OE-3DIS~\cite{nguyen2025open}
        & \ding{55} & \ding{51}
        & 16.0 & 22.0 & 24.7
        & --  & -- & --
        & 18.4 & 29.4 & 33.6
        & --  & -- & --
        & --  & -- & --
        & --  & -- & --
        \\
        Octree-Graph~\cite{wang2025open}
        & \ding{55} & \ding{51}
        & --  & -- & --
        & 14.3 & 25.8 & 33.6
        & --  & -- & --
        & --  & -- & --
        & --  & -- & --
        & --  & -- & --
        \\
        MV3DIS~\cite{Zhao_2026_CVPR}
        & \ding{55} & \ding{51}
        & 35.5 & 54.7 & 69.7
        & 15.5 & 20.2 & 24.2
        & 22.0 & 36.7 & 51.7
        & --  & -- & --
        & --  & -- & --
        & --  & -- & --
        \\
        \midrule
        OVIR-3D$_{+\text{Loc2his}}$
        & \ding{51} & \ding{51}
         & 16.2 & 30.6  & 45.4 
         & 8.5  & 16.4  & 22.5
         & 17.4 & 32.5  & 44.8
         & 5.0  & 8.1   & 11.6
         & 6.7  & 16.0  & 32.9
         & 4.8  & 9.5   & 18.6
        \\
        MaskClustering$_{+\text{Loc2his}}$
        & \ding{51} & \ding{51}
        & 17.4 & 31.9 & 51.5
        & 7.5 & 13.8 & 20.8
        & 18.7 & 31.5 & 46.9
        & 4.2 & 7.3 & 10.0
        & 4.9 & 11.6 & 30.1
        & 1.9 & 5.4 & 13.7
        \\
        \textbf{Stream3D (ours)}
        & \ding{51} & \ding{51}
        & {22.4} & {37.6} & {55.1}
        & {13.0} & {21.2} & {28.9}
        & {24.7} & {39.1} & {52.0}
        & 6.6 & {10.0} & {12.5}
        & 8.7 & 19.1 & {39.7}
        & 6.4 & 14.4 & 27.0
        \\
        \textbf{Stream3Dv2 (ours)}
        & \ding{51} & \ding{51}
        & 27.1  & 42.0 & 55.3   
        & 20.6  & 33.7 & 44.6   
        & 27.4  & 41.1 & 53.2
        & 13.5  & 21.3 & 29.1
        & 12.5  & 25.4 & 42.7
        & 12.3  & 25.8 & 40.0
        \\
    \bottomrule[0.7pt]
    \end{tabular}
}
\begin{tablenotes}[flushleft] 
      \footnotesize 
      \item[] ~``${+\text{Loc2his}}$'' indicates the method performs on local segmentation and employs the naive version of our Loc2his module to support streaming segmentation.
\end{tablenotes}
\end{table*}

\begin{table*}[!t]
\caption{\textbf{Per-class segmentation performance comparison} (semantic AP$_{50}$ across the first 20 classes) on ScanNet200 benchmark.
}\label{PCS}
\centering
\renewcommand\tabcolsep{3.0pt}
\resizebox{\linewidth}{!}{
    \begin{tabular}{l|cccccccccccccccccccc|c}
    \toprule[0.7pt]
    & \rotatebox{90}{~chair}    
    & \rotatebox{90}{~table}     
    & \rotatebox{90}{~door} 
    & \rotatebox{90}{~couch}   
    & \rotatebox{90}{cabinet}   
    & \rotatebox{90}{~shelf}  
    & \rotatebox{90}{~desk}
    & \rotatebox{90}{offi.cha.}  
    & \rotatebox{90}{~~bed}       
    & \rotatebox{90}{~pillow}  
    & \rotatebox{90}{~~sink} 
    & \rotatebox{90}{picture}    
    & \rotatebox{90}{window} 
    & \rotatebox{90}{~toilet}
    & \rotatebox{90}{booksh.}  
    & \rotatebox{90}{monitor}    
    & \rotatebox{90}{curtain}  
    & \rotatebox{90}{~book}    
    & \rotatebox{90}{armcha.}    
    & \rotatebox{90}{coff.tab.} 
    & \rotatebox{90}{~~\textbf{Avg.}}
    \\ 
    \midrule
        MaskClustering
        & 36.1  & 35.2 & 0.5  & 20.5  & 2.1  & 8.2 & 4.8   & 19.3 & 26.8 & 21.7  & 37.6 & 2.6 & 6.3   & 32.0 & 2.0 & 44.3  & 23.1 & 2.8 & 16.1  & 31.2 & 18.7
        \\
        Stream3D
        & 38.8  & 36.9 & 0.3
        & 21.9  & 4.3  & 9.1
        & 4.1   & 22.6 & 25.9
        & 15.3  & 29.5 & 1.4
        & 8.4   & 44.6 & 2.0
        & 41.9  & 30.1 & 3.9
        & 19.0  & 44.4 & 20.2
        \\
        Stream3Dv2
        & 40.9  & 49.7 & 50.0
        & 22.5  & 24.2 & 9.1
        & 6.2  & 56.6 & 60.0
        & 23.7  & 41.5 & 35.4
        & 33.7  & 74.6 & 17.5
        & 45.7  & 58.9 & 24.7
        & 46.3  & 45.6 & 38.3
        \\
        \midrule
        \textbf{Improve. $\uparrow$}
        & 2.1  & 12.8 & 49.5
        & 0.6  & 19.9 & 0.0
        & 1.4  & 34.0 & 33.2
        & 2.0  & 3.9 & 32.8
        & 25.3  & 30.0 & 15.5
        & 1.4  & 28.8 & 20.8
        & 27.3  & 1.2 & \textbf{18.1}
        \\
    \bottomrule[0.7pt]
    \end{tabular}
}
\end{table*}

\subsection{Experimental Setup}\label{sec:es}
\textbf{Benchmark datasets.} Following~\cite{yan2024maskclustering,Xu_2026_CVPR}, we conduct the main 3D instance segmentation experiments on widely-used benchmarks including single-room {ScanNet200}~\cite{dai2017scannet,rozenberszki2022language}, high-fidelity {ScanNet++}~\cite{yeshwanth2023scannet++}, large-building {MatterPort3D}~\cite{Matterport3D}.
These datasets consist of real-world indoor scenes, and we follow previous work that uses the validation sets of ScanNet200 (312 scenes, 198 classes, 54090 total frames) and ScanNet++ (50 scenes, 1554 classes, 33884 total frames), as well as the testing set of MatterPort3D (8 scenes, 157 classes, 11052 total frames).
We also follow the setting in \cite{lemeshko2026zoo3d} to perform the 3D object detection on {ScanNet200} dataset.

\textbf{Evaluation metrics.} Following previous work~\cite{yan2024maskclustering,tang2025onlineanyseg,Xu_2026_CVPR}, we evaluate 3D segmentation performance by three metrics AP, AP$_{25}$, and AP$_{50}$.
AP denotes the mean average precision across IoU thresholds from $0.5$ to $0.95$ with $0.05$ intervals.
AP$_{25}$ and AP$_{50}$ are standard average precision under IoU thresholds of $0.25$ and $0.5$, respectively.
We report the class-agnostic and semantic 3D instance segmentation results to comprehensively evaluate the model performance.
By following~\cite{lemeshko2026zoo3d}, AP$_{25}$ and AP$_{50}$ are adopted to evaluate the class-agnostic 3D object detection performance.

\textbf{Comparison baselines.} We compare our method Stream3D and Stream3Dv2 with the representative open-vocabulary 3D segmentation methods including both full-sequence and streaming approaches.
To be specific,
Mask3D~\cite{schult2023mask3d},
OpenMask3D~\cite{takmaz2023openmask3d},
and SAM-Graph~\cite{guo2024sam}
are classical open-vocabulary 3D instance segmentation methods, and they can only operate under the conditions of full-sequence with extra training.
OVIR-3D~\cite{Lu2023OVIR3DO3},
MaskClustering~\cite{yan2024maskclustering},
SAI3D~\cite{yin2024sai3d},
SAM2Object~\cite{zhao2025sam2object},
VoxelFlow~\cite{wang2025voxelflow},
OE-3DIS~\cite{nguyen2025open},
Octree-Graph~\cite{wang2025open},
and MV3DIS~\cite{Zhao_2026_CVPR}
are the state-of-the-art methods in recent years, which have made significant progress in open-vocabulary zero-shot (as known as training-free) 3D instance segmentation.
These methods need the full-sequence RGB-D input and are unable to conduct streaming 3D segmentation.

Currently, the streaming zero-shot 3D instance segmentation (SZ3DS) usually is also training-free and has been rarely studied.
For the full-sequence methods OVIR-3D and MaskClustering, we establish their variants OVIR-3D$_{+\text{Loc2his}}$ and MaskClustering$_{+\text{Loc2his}}$ for SZ3DS, by applying them on local scenes' segmentation and incrementally merging the instance segmentation results with our proposed ${{Loc2his}}$ module.
Additionally, we conduct sufficient literature review and find four streaming 3D segmentation methods for comparison, including
SAM3D~\cite{yang2023sam3d},
EmbodiedSAM~\cite{xuembodiedsam},
OnlineAnySeg~\cite{tang2025onlineanyseg},
and MoonSeg3R~\cite{du2026moonseg3r} and their experiments are limited to class-agnostic 3D instance segmentation on ScanNet200 dataset due to underlying pipeline differences.

For open-vocabulary 3D object detection, we follow the state-of-the-art method~\cite{lemeshko2026zoo3d} and take OpenM3D~\cite{hsu2025openm3d}, Zoo3D$_1$~\cite{lemeshko2026zoo3d}, and Zoo3D$_0$~\cite{lemeshko2026zoo3d} as three baselines.

\textbf{Implementation details.}
We use the same Open3D library adopted in \cite{yan2024maskclustering} to accelerate the processing of point clouds, and employ the same CLIP~\cite{radford2021learning} with ViT-H to implement the image-text matching for open-vocabulary labeling in our Stream3D framework.
For 2D image segmentation models, we employ both SAM2~\cite{ravisam} and SAM3~\cite{carion2026sam} to implement the grid-prompted and semantic-prompted segmentation, respectively.
We adopt the same class words as used in CLIP by previous work~\cite{Lu2023OVIR3DO3,yan2024maskclustering,Xu_2026_CVPR} to form the semantic prompts in the segmentation of SAM3.
The Codex v0.142.0 and gpt-5.5 are adopted to perform open-vocabulary scene understanding and reasoning tasks.
We test our method on 4 NVIDIA GeForce RTX 4090 GPUs on which all samples of a dataset are evenly distributed and processed in parallel.
Evaluation is conducted between the points detected in RGB-D and their ground-truth labels.
For all comparison experiments,
across ScanNet200, ScanNet++, and MatterPort3D, the number of local frames is set to $20$;
the manifold distance is set to $0.05$;
the downsampling rate for selecting key point is $0.05$ in SCP-based noise mask filtering;
and the overlap threshold is $0.2$ in IoU-based key mask merging.
In semantic-driven fine-grained 3D segmentation, $\lambda$ is an implicit and sufficiently large value without requiring setting.
In point cloud manifold refinement, the expected average number of points in voxel is set to $20$, and the maximal iteration $R$ is $5$.

% \footnote{More experiment details can be found in our Appendix and Code which is available at \url{https://github.com/SubmissionsIn/Stream3D} upon acceptance.}.

\begin{figure*}[!ht]
\centering
\includegraphics[width=1\linewidth]{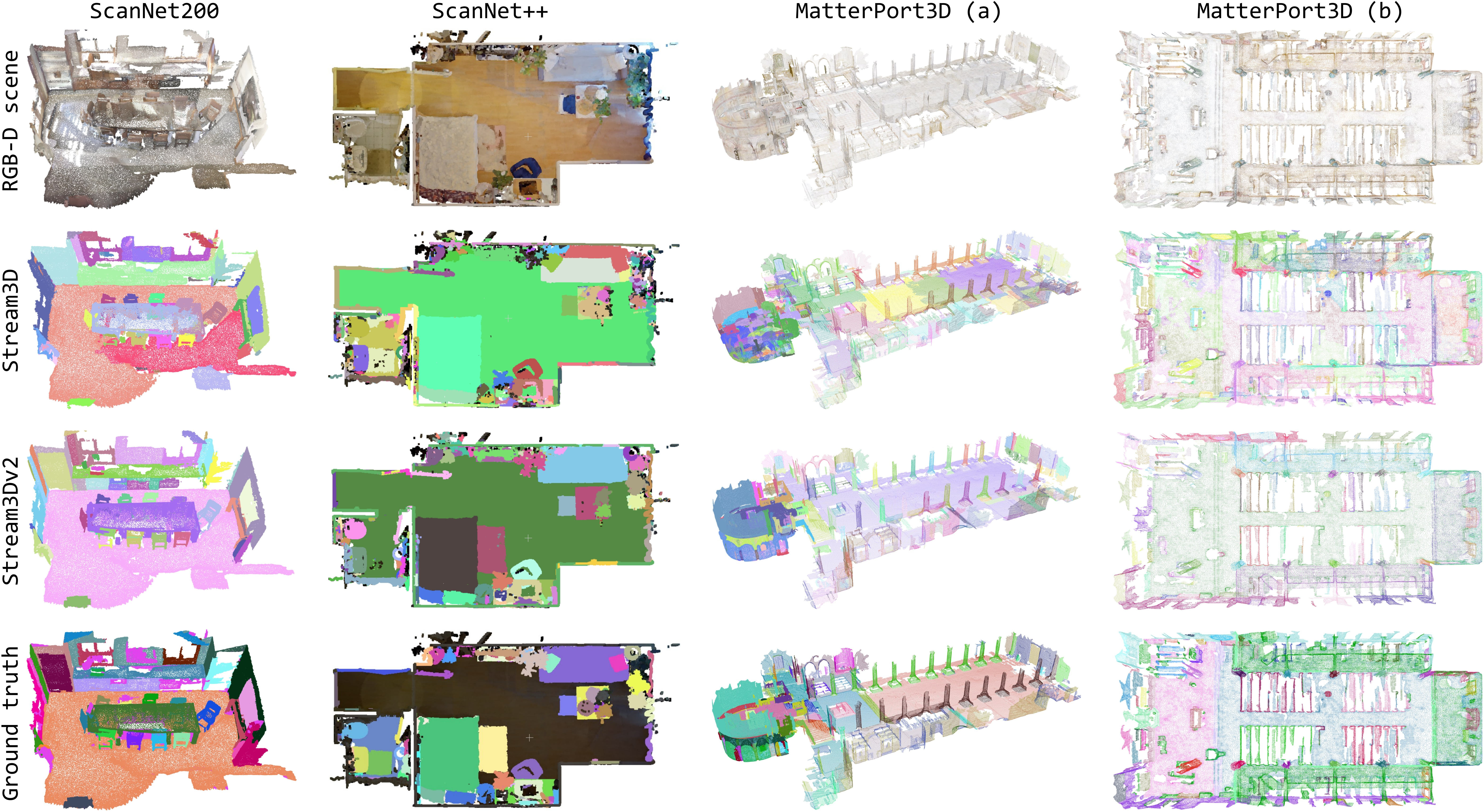}
\caption{Visualization of 3D instance segmentation comparison across RGB-D scene, Stream3D, Stream3Dv2, and Ground truth.}\label{fig:3DS}
\end{figure*}
\begin{figure}[!ht]
\centering
\includegraphics[width=1\linewidth]{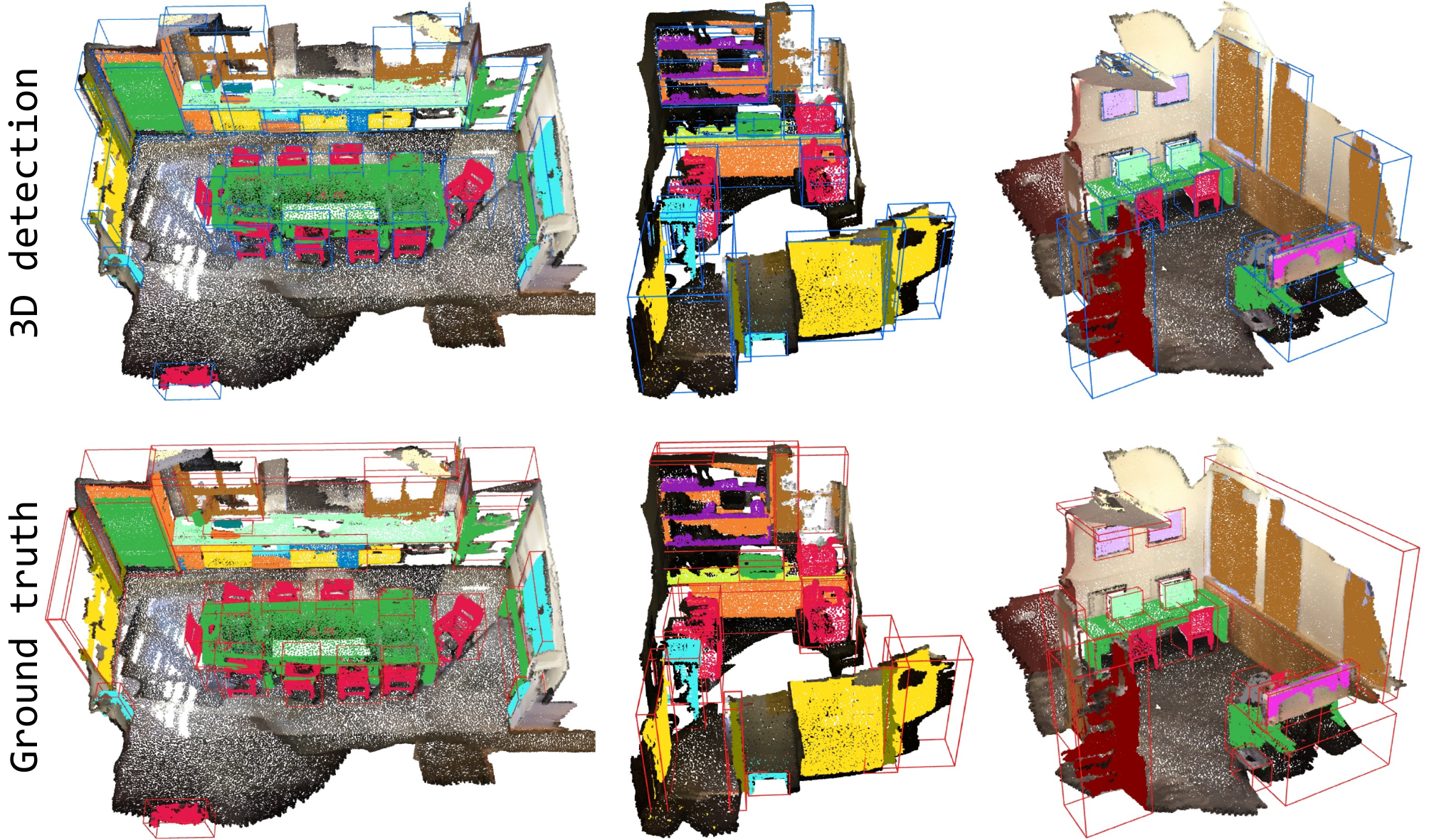}
\caption{Visualization of 3D object detection comparison between our Stream3Dv2 and Ground truth on ScanNet200 benchmark.}\label{fig:3DD}
\end{figure}
\begin{table}[!t]
\caption{\textbf{Class-agnostic streaming 3D instance segmentation} on ScanNet200 benchmark.
\textbf{ST} \ding{51}: streaming method.\\
\textbf{ZS} \ding{51}: zero-shot method.
\textbf{FPS}: frames per second.
}\label{CAS}
\centering
\renewcommand\tabcolsep{6.0pt}
% \resizebox{\linewidth}{!}{
    \begin{tabular}{l|cc|ccc|c}
    \toprule[0.7pt]
    \multirow{2}*{\textbf{Method}} & \multirow{2}*{\textbf{ST}} & \multirow{2}*{\textbf{ZS}} & \multicolumn{4}{c}{\textbf{ScanNet200}} \\ 
    \cline{4-7}
          & & & AP & AP$_{50}$ & AP$_{25}$ & FPS \\
          \midrule
        EmbodiedSAM~\cite{xuembodiedsam}
        & \ding{51} & \ding{55}
        & 28.8  & 42.7 & 54.2
        & 10  
        \\
        SAM3D~\cite{yang2023sam3d}
        & \ding{51} & \ding{51}
        & 9.6 & 24.8 & 49.6
        & 8  
        \\
        OnlineAnySeg~\cite{tang2025onlineanyseg}
        & \ding{51} & \ding{51}
        & 18.6 & 36.1 & 53.5
        & 15 
        \\
        MoonSeg3R~\cite{du2026moonseg3r}
        & \ding{51} & \ding{51}
        & 16.7 & 33.3 & 50.0
        & 15  
        \\
        \textbf{Stream3D (ours)}
        & \ding{51} & \ding{51}
        & {22.4} & {37.6} & {55.1}
        & 13
        \\
        \textbf{Stream3Dv2 (ours)}
        & \ding{51} & \ding{51}
        & 27.1  & 42.0 & 55.3     
        & 11
        \\
    \bottomrule[0.7pt]
    \end{tabular}
% }
\end{table}

\subsection{Main Results}\label{sec:cr}

\textbf{Open-vocabulary 3D instance segmentation.}
Table~\ref{OV3DIS} shows the segmentation performance across three datasets, from which we can find that achieving robust results across different indoor environments is non-trivial due to substantial domain shifts in sensor noise, point density, and scene layouts.
This difficulty is further compounded when jointly imposing streaming perception ($\text{ST}\checkmark$) and zero-shot setting ($\text{ZS}\checkmark$),
where models must dynamically parse unseen object categories frame-by-frame without access to full-sequence point clouds or future temporal context.
As evidenced in Table~\ref{OV3DIS}, zero-shot methods have the advantage of training-free but might suffer drastic performance degeneration without domain-specific knowledge
(\eg for the class-agnostic $\text{AP}$ metric on ScanNet200, OVIR-3D and OE-3DIS drop sharply from the Mask3D's $39.7\%$ to $14.4\%$ and $16.0\%$, respectively).
Furthermore, naive streaming zero-shot baselines still struggle, yielding merely $8.5\%$ ($\text{OVIR-3D}_{+\text{Loc2his}}$) and $7.5\%$ ($\text{MaskClustering}_{+\text{Loc2his}}$) semantic $\text{AP}$ on ScanNet200, uncovering the extreme difficulty of maintaining segmentation fidelity under simultaneous streaming and zero-shot constraints.

\begin{table}[!t]\caption{\textbf{Open-vocabulary 3D object detection} on ScanNet200 benchmark.
\textbf{ST} \ding{51}: streaming method.
\textbf{ZS} \ding{51}: zero-shot method.}\label{OV3DOD}
\small
\centering
\renewcommand\tabcolsep{6.0pt}
% \resizebox{\linewidth}{!}{
    \begin{tabular}{l|cc|cc}
    \toprule[0.7pt]
    \multirow{2}*{\textbf{Method}} & \multirow{2}*{\textbf{ST}} & \multirow{2}*{\textbf{ZS}} & \multicolumn{2}{c}{\textbf{ScanNet200}} \\ 
    \cline{4-5}
        & & & AP$_{50}$ & AP$_{25}$ \\ \midrule
        OpenM3D~\cite{hsu2025openm3d}
        & \ding{55} & \ding{55}
        & -- & 4.2   
        \\
        Zoo3D$_1$~\cite{lemeshko2026zoo3d}
        & \ding{55} & \ding{55}
        & 15.2 & 23.5 
        \\
        Zoo3D$_0$~\cite{lemeshko2026zoo3d}
        & \ding{55} & \ding{51}
        & 14.1 & 21.1 
        \\
        \textbf{Stream3D (ours)}
        & \ding{51} & \ding{51}
        & 10.6 & 18.9
        \\
        \textbf{Stream3Dv2 (ours)}
        & \ding{51} & \ding{51}
        & 14.2 & 24.8  
        \\
    \bottomrule[0.7pt]
    \end{tabular}
% }
\end{table}

Even with these stringent constraints, our framework makes significant progress over comparison methods.
To be specific, in Table~\ref{OV3DIS}, our {Stream3Dv2} outperforms the SOTA zero-shot baseline MV3DIS on ScanNet200 (\eg semantic $\text{AP}$ $20.6\%$ vs. $15.5\%$), despite operating in a streaming setting while MV3DIS requires the full sequence.
On ScanNet++ and MatterPort3D, our {Stream3Dv2} surpasses the best competitors across all evaluation metrics, whether in class-agnostic or semantic segmentation.
Comparison with the specially-designed streaming baselines (also known as online) is shown in Table~\ref{CAS}, where our {Stream3D} and {Stream3Dv2} consistently obtain better performance than streaming zero-shot baselines with the comparable time efficiency.
Moreover, comparing our two versions reveals significant overall gains from {Stream3D} to {Stream3Dv2}, \eg boosting class-agnostic $\text{AP}$ $+4.7\%$, $+2.7\%$, $+3.8\%$ and semantic $\text{AP}$ $+7.6\%$, $+6.9\%$, $+5.9\%$ on the three benchmarks, indicating our new method Stream3Dv2 has achieved substantial improvement.

To further inspect this improvement, Table~\ref{PCS} details the per-class semantic $\text{AP}_{50}$ performance across the first 20 categories on ScanNet200.
{Stream3Dv2} elevates the average semantic $\text{AP}_{50}$ from $20.2\%$ ({Stream3D}) and $18.7\%$ ({MaskClustering}) to $38.3\%$, yielding a massive average improvement of $+18.1\%$.
Notable performance leaps are observed across challenging geometry classes and fine-grained structures, such as \textit{office chair} ($22.6\%\to56.6\%$), \textit{bed} ($26.8\%\to60.0\%$), \textit{picture} ($2.6\%\to35.4\%$), and \textit{toilet} ($44.6\%\to74.6\%$).
This substantial per-class margin validates that the architectural enhancements in {Stream3Dv2} effectively boost streaming semantic association.
Fig.~\ref{fig:3DS} visualizes the segmentation comparison between Stream3D and Stream3Dv2 on a single-room scene in ScanNet200, a high-fidelity scene in ScanNet++, and two large-building scenes in MatterPort3D, suggesting that our Stream3Dv2 owns better temporal geometry consistency.

\begin{figure}[!t]
\centering
\includegraphics[width=1\linewidth]{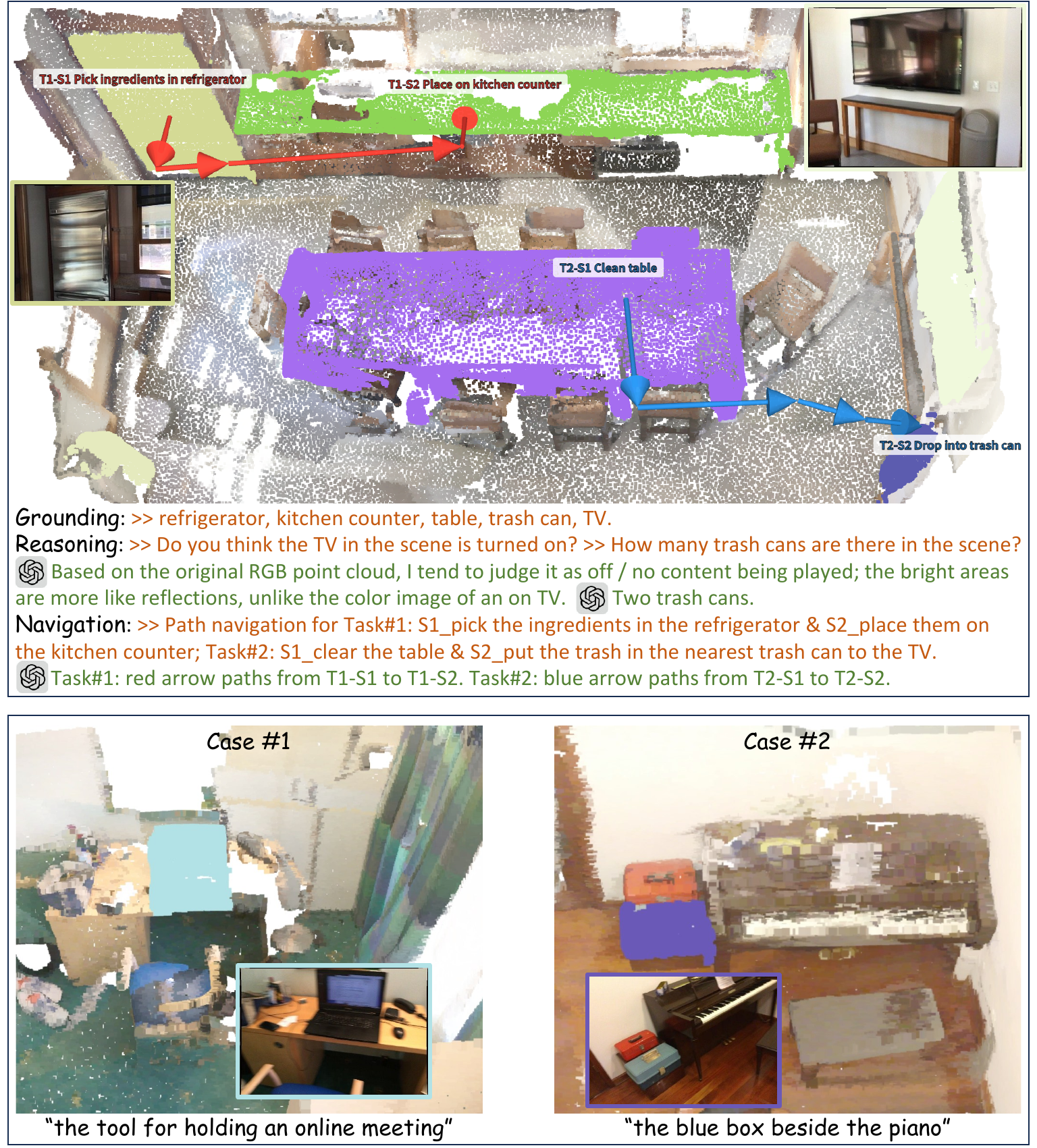}
\caption{Examples of downstream tasks by Stream3Dv2 equipped with LLM agent. \emph{Top}: 3D instance grounding, reasoning, and navigation. \emph{Bottom}: open-vocabulary queries with indirect semantic descriptions.
}\label{fig:agent}
\end{figure}

\textbf{Open-vocabulary 3D object detection.}
Beyond instance segmentation, Table~\ref{OV3DOD} evaluates open-vocabulary 3D object detection performance on ScanNet200.
Fig.~\ref{fig:3DD} directly shows the detection results by our Stream3Dv2 which even obtains more reasonable 3D bounding boxes than annotations.

Under the dual constraints of zero-shot setting ($\text{ZS}\checkmark$) and streaming perception ($\text{ST}\checkmark$), our {Stream3Dv2} achieves $14.2\%$ $\text{AP}_{50}$ and $24.8\%$ $\text{AP}_{25}$.
This demonstrates that our framework effectively consolidates geometric and semantic information into precise 3D bounding boxes in streaming manner.
Compared to the zero-shot full-sequence baseline $\text{Zoo3D}_0$, our {Stream3Dv2} creates a significant improvement of $+3.7\%$ in $\text{AP}_{25}$ ($21.1\%\to24.8\%$).
Remarkably, {Stream3Dv2} even surpasses the non-zero-shot full-sequence model $\text{Zoo3D}_1$ in terms of $\text{AP}_{25}$ by $+1.3\%$ ($23.5\%\to24.8\%$).
Consistent with our segmentation results, {Stream3Dv2} brings substantial gains over {Stream3D}, boosting $\text{AP}_{50}$ from $10.6\%$ to $14.2\%$ ($+3.6\%$) and $\text{AP}_{25}$ from $18.9\%$ to $24.8\%$ ($+5.9\%$).
These consistent advancements across both segmentation and detection tasks highlight the robust architectural design of Stream3Dv2.

\textbf{3D scene understanding with LLM-based agent.}
We integrate Stream3Dv2 with Codex (v0.142.0, gpt-5.5) as an LLM agent to further validate the 3D scene understanding ability.
The agent uses RGB values and 3D segmentation point cloud files obtained by Stream3Dv2 as the input, combined with the prompt instructions.
This can support the 3D instance grounding, scene reasoning, path navigation, and fuzzy query tasks as demonstrated in Fig.~\ref{fig:agent}.
The top panel showcases that it accurately reasons about fine-grained object states (\eg TV display status, number of trash cans) and grounds sequential navigation paths (\eg T1-S1 to T1-S2).
In the bottom panel, the cases use indirect semantic queries for open-vocabulary 3D instance segmentation, highlighting the ability of Stream3Dv2 to support language-driven scene understanding and reasoning.
By bridging 3D spatial perception with LLM understanding, our Stream3Dv2 overcomes the constraints of closed-vocabulary models, demonstrating application potential in downstream tasks for embodied agents.

\begin{figure}[!t]
	\centering 
	\includegraphics[width=0.49\textwidth]{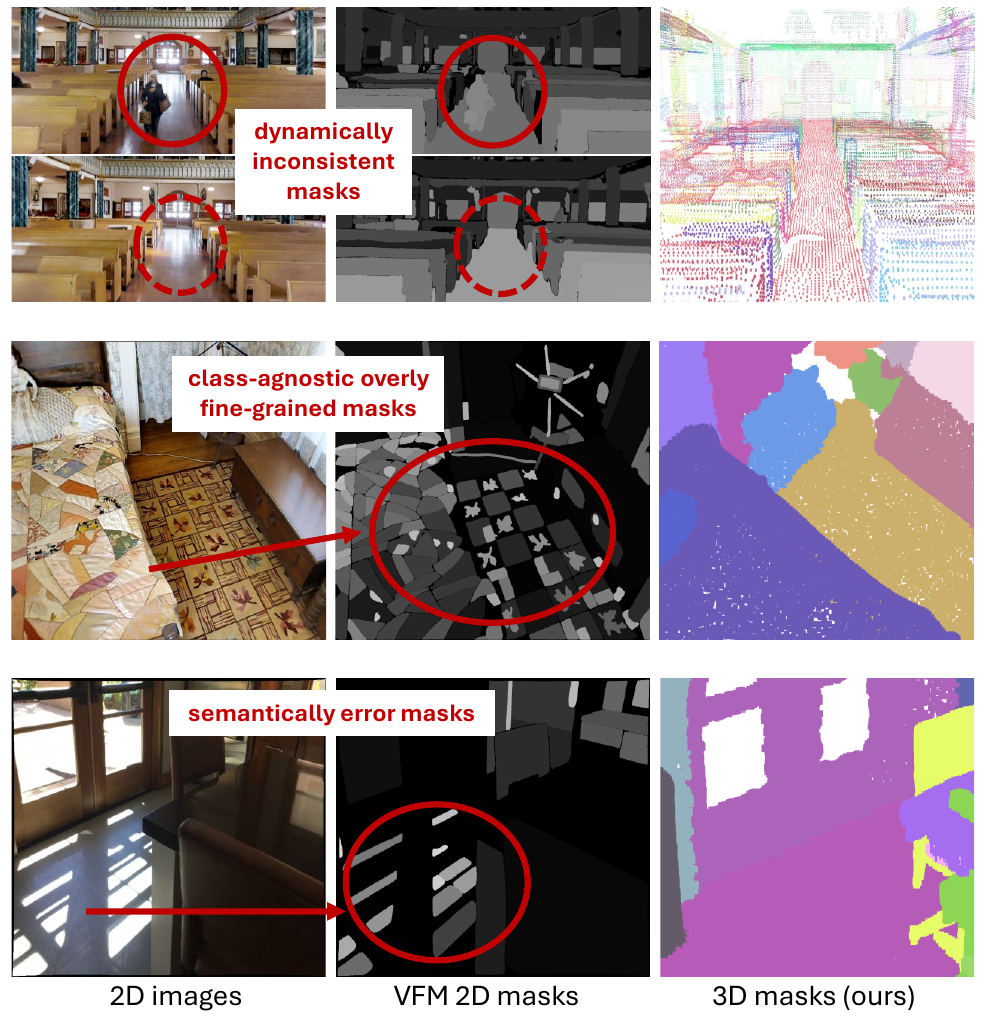}
    \caption{Our method obtains robust 3D segmentation by mitigating 2D noise masks from VFMs, in cases such as dynamically inconsistent masks, class-agnostic fine-grained masks, semantically error masks.}\label{fig:noise_robustness}
\end{figure}

\begin{figure*}[t]
\centering
\includegraphics[width=1\linewidth]{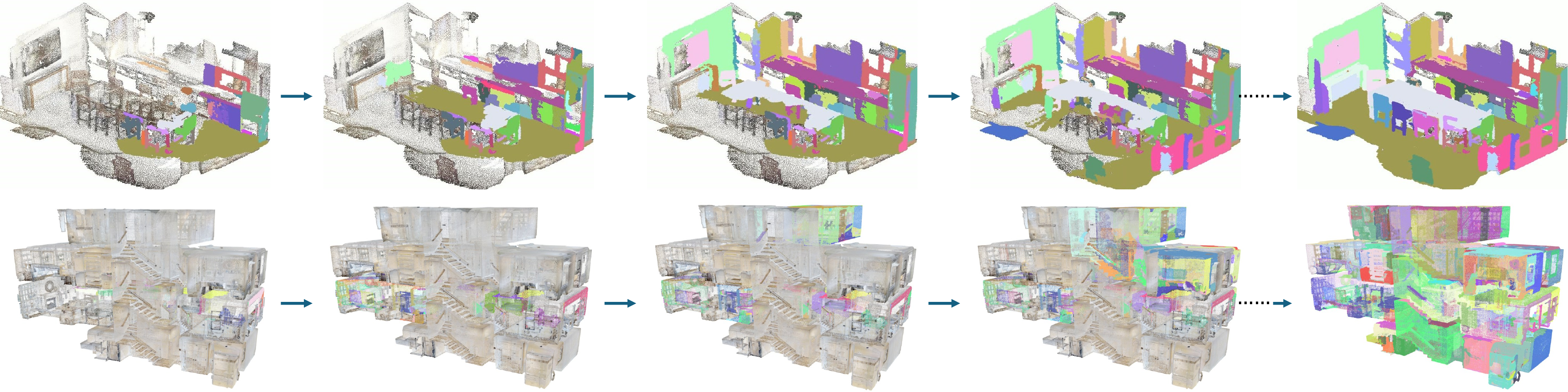}
\caption{The process of our streaming zero-shot 3D instance segmentation on \emph{Top}: room-level cases and \emph{Bottom}: large building-level cases.}\label{fig:Case}
\end{figure*}

\begin{table*}[!t]
\caption{
\textbf{Ablation study on the main methodologies in Stream3Dv2 paradigm.}
\emph{\textbf{MVF}} denotes the multi-view noise mask filtering. \\
\emph{\textbf{SDS}} is the semantic-driven fine-grained 3D segmentation.
\emph{\textbf{PMR}} is the point cloud manifold refinement.
}\label{abl}
\centering
\renewcommand\tabcolsep{2.0pt}
\resizebox{\linewidth}{!}{
    \begin{tabular}{l|ccc|ccc|ccc|ccc|ccc|ccc|ccc}
    \toprule[0.7pt]
    &\multirow{3}*{\emph{\textbf{MVF}}} & \multirow{3}*{\emph{\textbf{SDS}}} & \multirow{3}*{\emph{\textbf{PMR}}} & \multicolumn{6}{c|}{\textbf{ScanNet200}} & \multicolumn{6}{c|}{\textbf{ScanNet++}} & \multicolumn{6}{c}{\textbf{MatterPort3D}}\\ 
    \cline{5-22}
    &&&& \multicolumn{3}{c|}{Class-agnostic} & \multicolumn{3}{c|}{Semantic}& \multicolumn{3}{c|}{Class-agnostic} & \multicolumn{3}{c|}{Semantic}& \multicolumn{3}{c|}{Class-agnostic} & \multicolumn{3}{c}{Semantic}\\ 
    \cline{5-22}
        &&& & AP & AP$_{50}$ & AP$_{25}$ & AP & AP$_{50}$ & AP$_{25}$ & AP & AP$_{50}$ & AP$_{25}$ & AP & AP$_{50}$ & AP$_{25}$ & AP & AP$_{50}$ & AP$_{25}$ & AP & AP$_{50}$ & AP$_{25}$ \\ 
        \midrule
        (I) &\ding{55} & \ding{55} & \ding{55} 
        & 16.6 & 28.7 & 45.7
        & 8.0  & 14.3 & 21.5
        & 5.0  & 9.6  & 19.4
        & 1.7  & 3.5  & 5.3
        & 3.0  & 7.0  & 18.4
        & 1.6  & 3.6  & 10.7
        \\
        (II) &\ding{51}&  & 
        & 19.0 & 32.4 & 49.7
        & 9.6  & 16.1 & 22.6
        & 11.3 & 22.1 & 39.8
        & 3.4  & 6.2  & 9.8
        & 5.6  & 13.1 & 33.3
        & 4.3  & 11.7 & 23.2
        \\
        (III) && \ding{51} & 
        & 20.1 & 33.9 & 48.9
        & 17.1 & 28.9 & 42.0
        & 18.3 & 30.8 & 43.3
        & 12.8 & 20.6 & 28.4
        & 10.6 & 21.3 & 39.9
        & 10.4 & 22.5 & 38.7
        \\
        (IV) & & & \ding{51}
        & 18.6 & 32.2 & 48.3
        & 8.8  & 16.0  & 28.5
        & 14.1 & 23.4 & 35.1
        & 4.5  & 6.4  & 8.0
        & 8.0 & 17.1 & 33.9
        & 2.5 & 4.8  & 12.6
        \\
        (V) & \ding{51} & \ding{51} & 
        & 24.7 & 39.3 & 53.5
        & 17.8 & 29.5 & 43.6
        & 21.9 & 36.3 & 51.3
        & 13.6 & 21.6 & 29.2
        & 11.4 & 23.4 & 41.8
        & 11.7 & 23.3 & 39.6
        \\
        (VI) & \ding{51} &  &  \ding{51}
        & 22.1 & 37.4 & 55.1
        & 12.6 & 20.8 & 28.1
        & 24.6 & 39.0 & 52.4
        & 6.4  & 9.4  & 12.1
        & 8.8  & 19.1 & 40.1
        & 7.7  & 16.8 & 27.7
        \\
        (VII) & & \ding{51} &  \ding{51}
        & 26.7 & 41.1 & 54.4
        & 18.2 & 30.5 & 43.9
        & 26.2 & 39.6 & 51.8
        & 13.2 & 20.6 & 29.2
        & 12.6 & 25.0 & 42.3
        & 11.5 & 25.1 & 38.5
        \\
        (VIII) & \ding{51} & \ding{51} & \ding{51} 
        & 27.1  & 42.0 & 55.3   
        & 20.6  & 33.7 & 44.6   
        & 27.4  & 41.1 & 53.2
        & 13.5  & 21.3 & 29.1
        & 12.5  & 25.4 & 42.7
        & 12.3  & 25.8 & 40.0
        \\
    \bottomrule[0.7pt]
    \end{tabular}
}
\end{table*}

\begin{table}[!ht]
\caption{\textbf{Time latency} during long sequence segmentation on MatterPort3D benchmark (scene id: gxdoqLR6rwA).}\label{ST}
\centering
\renewcommand\tabcolsep{4pt} % 调整表格列间的长度
\begin{threeparttable}
    \begin{tabular}{l|c|c|c|c}
    \toprule[0.7pt]
     & $t=400$ & $t=1200$ & $t=2000$ & $t=2800$ \\
    \midrule
    Loc 3D seg. (ms) & 52  &  67  &  43 &  73 \\
    Loc2his 3D seg. (ms) & 91  &  241 & 237 & 377 \\
    Memory (MB) & 10.4  & 15.6   & 19.0  & 21.3  \\
    \bottomrule[0.7pt]
    \end{tabular}
\end{threeparttable}
\end{table}

\subsection{Model Analysis}\label{sec:ma}
\textbf{Robustness from 2D to 3D.}
To demonstrate the resilience of our framework, Fig.~\ref{fig:noise_robustness} provides qualitative comparisons across challenging scenarios where 2D Vision Foundation Models (VFMs) suffer from severe mask degradation.
Specifically, 2D VFMs frequently produce noise segmentation masks under real-world conditions, including but not limited to
\emph{Top:} dynamically inconsistent masks across frames caused by transient objects or moving pedestrians.
\emph{Middle:} class-agnostic, overly fine-grained masks that over-segment continuous surfaces into fragmented patches (\eg patterned carpets).
\emph{Bottom:} semantically erroneous masks triggered by complex lighting and shadow reflections.
Despite these corrupted 2D inputs, our 3D segmentation approach (right column in Fig.~\ref{fig:noise_robustness}) effectively filters out frame-wise noise masks and resolves multi-view semantic ambiguities.
By enforcing spatial continuity and leveraging semantic consistency, our method produces clean, holistic, and semantically accurate 3D instance masks, demonstrating strong robustness against imperfect 2D priors.

\textbf{Streaming 3D segmentation.}
Fig.~\ref{fig:Case} illustrates the streaming 3D segmentation process of our method, offering insight into its underlying mechanisms. Starting from an initial local perspective, our approach incrementally segments newly discovered instances. For partially observed objects, the instance masks are dynamically refined as the scene stream evolves, closely mirroring how humans progressively perceive unfamiliar environments. Notably, owing to its light computational overhead, our method scales seamlessly from single-room settings to large-scale building environments.
To further demonstrate this scalable efficiency, Table~\ref{ST} reports the time latency on a long-sequence scene. Our local segmentation part maintains the stable execution time ($<100\text{ ms}$), while the memory footprint of the mask pool and the local-to-historical segmentation part grow linearly as the number of RGB-D frames increases ($t$ denotes the $t$-th frame), highlighting its suitability for long-sequence streaming perception tasks.

\begin{figure*}[!t]
\centering
  \begin{subfigure}{0.24\linewidth}
    \includegraphics[width=\linewidth]{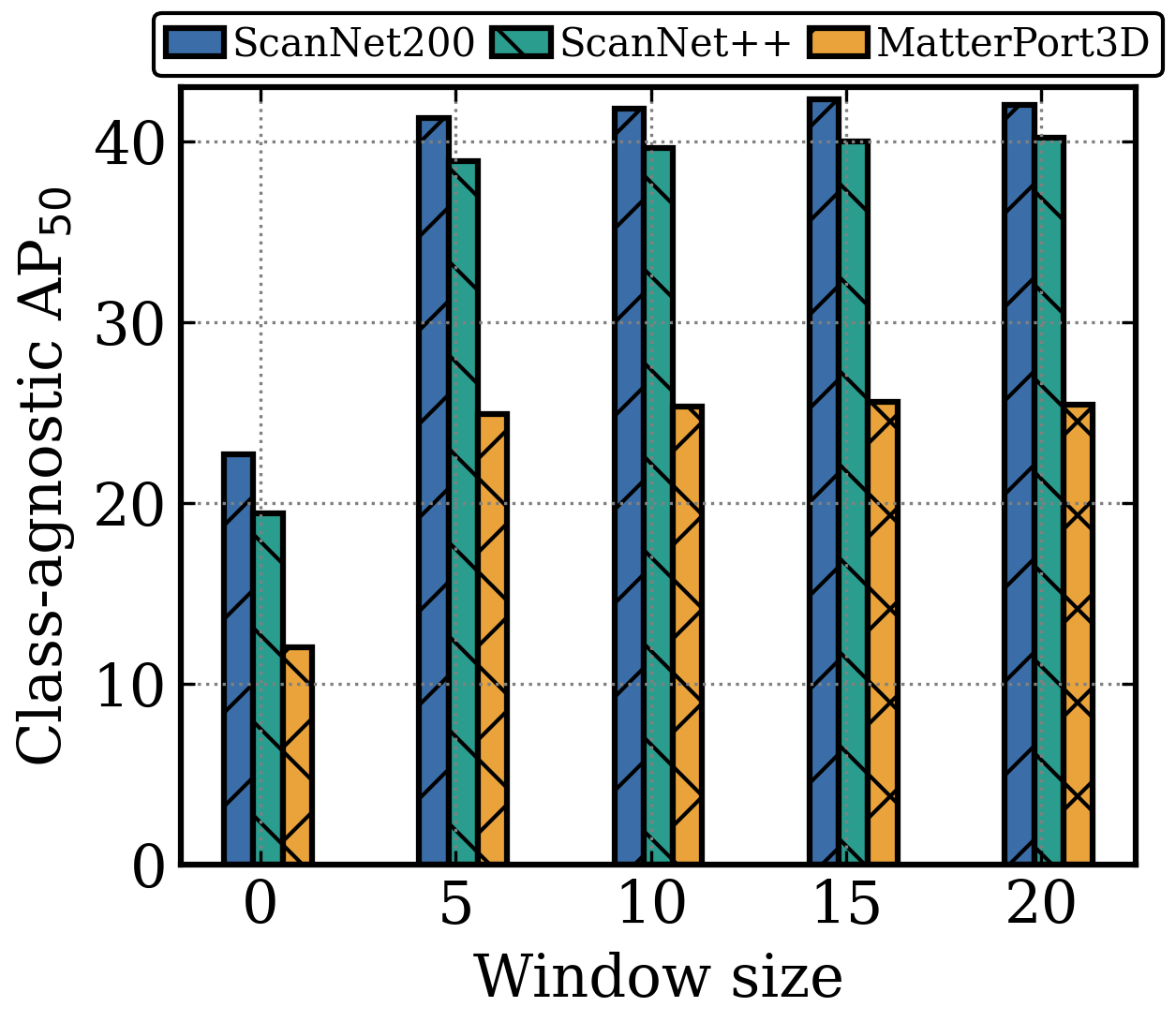}
    \caption{}
  \end{subfigure}
  \begin{subfigure}{0.249\linewidth}
    \includegraphics[width=\linewidth]{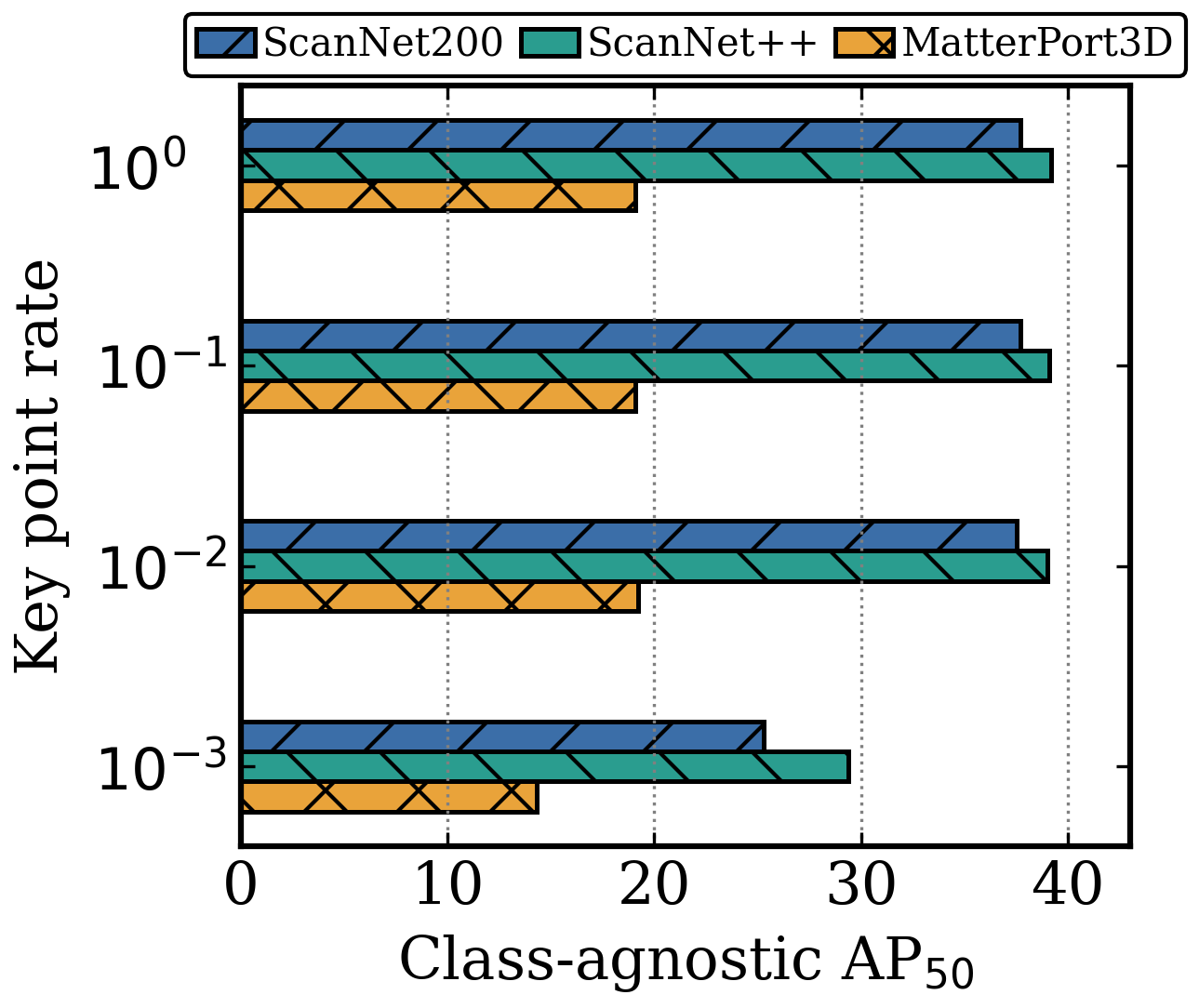}
    \caption{}
  \end{subfigure}
  \begin{subfigure}{0.235\linewidth}
    \includegraphics[width=\linewidth]{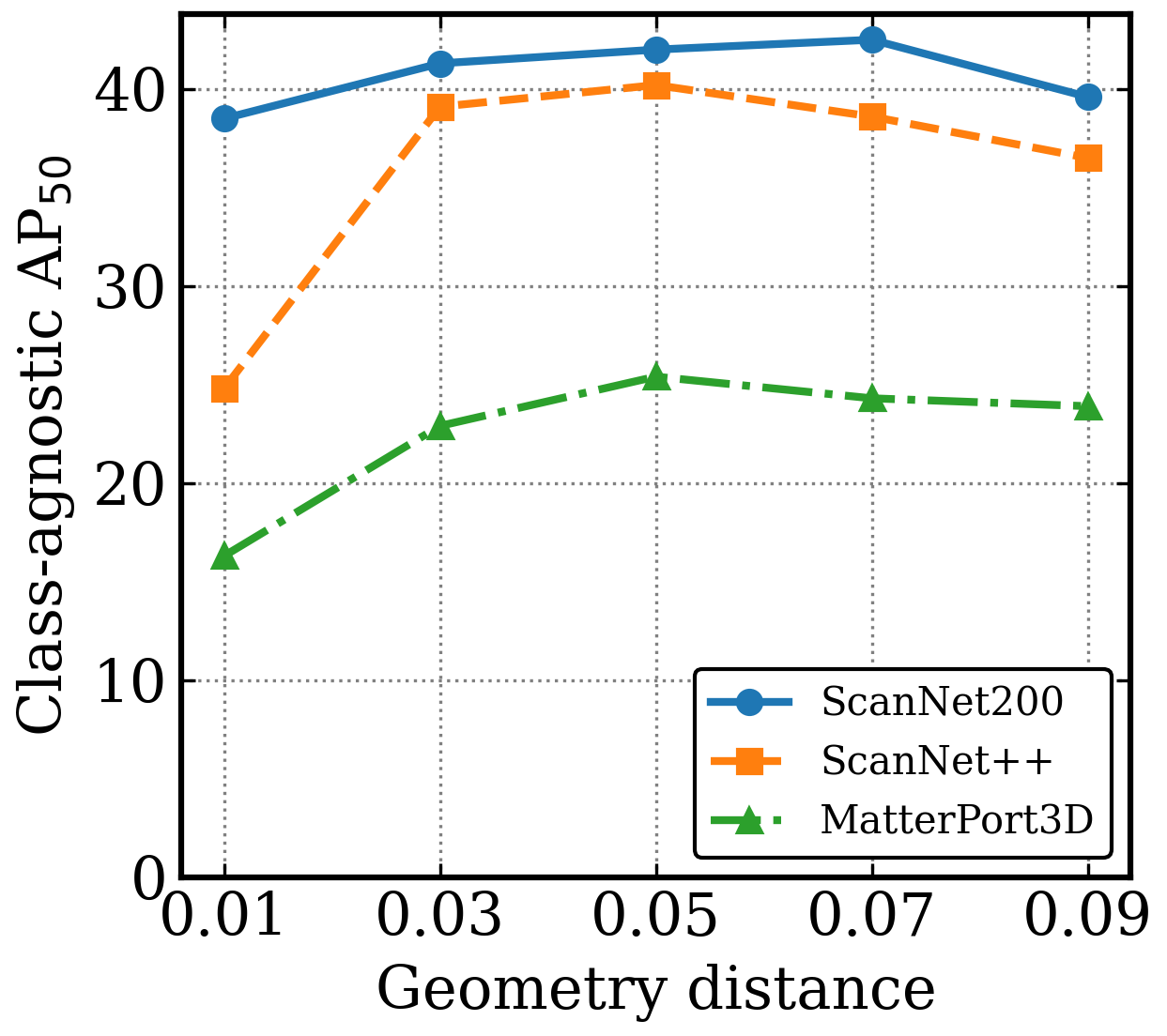}
    \caption{}
  \end{subfigure}
  \begin{subfigure}{0.235\linewidth}
    \includegraphics[width=\linewidth]{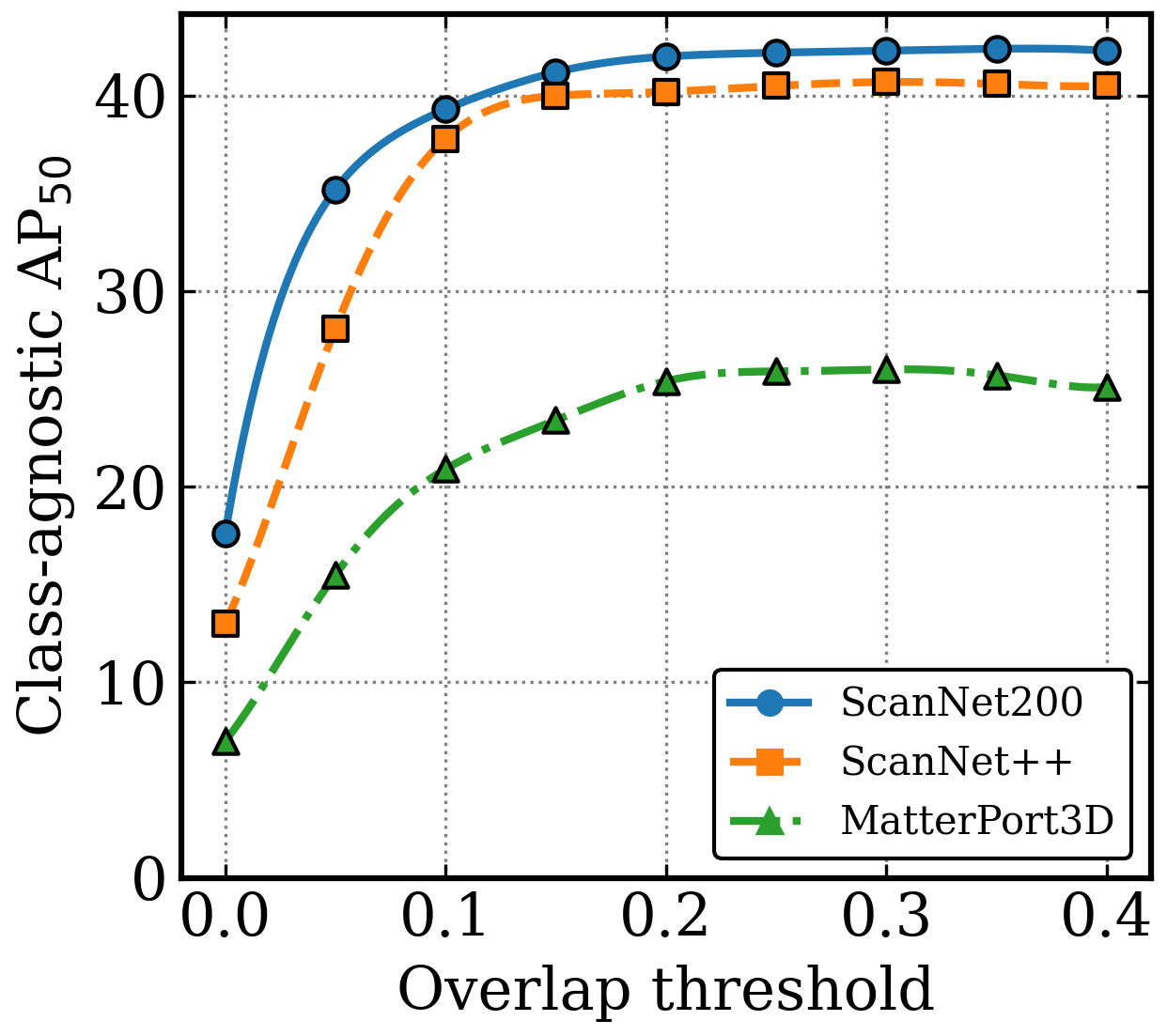}
    \caption{}
  \end{subfigure}
\caption{Parameter analysis on (a) window size $k$, (b) key point rate $\gamma$, (c) geometry distance $\delta$, and (d) overlap threshold $\alpha$.}\label{para}
\end{figure*}

\begin{table*}[!t]
\caption{
\textbf{Ablation Study on two kinds of 2D image segmentation prompts} in Local Multi-View Coarse-Grained 3D Segmentation.
}\label{prompt}
\centering
\renewcommand\tabcolsep{3.0pt}
\resizebox{\linewidth}{!}{
    \begin{tabular}{l|ccc|ccc|ccc|ccc|ccc|ccc}
    \toprule[0.7pt]
    \multirow{3}*{\textbf{Prompt}}  & \multicolumn{6}{c|}{\textbf{ScanNet200}} & \multicolumn{6}{c|}{\textbf{ScanNet++}} & \multicolumn{6}{c}{\textbf{MatterPort3D}}\\ 
    \cline{2-19}
    & \multicolumn{3}{c|}{Class-agnostic} & \multicolumn{3}{c|}{Semantic}& \multicolumn{3}{c|}{Class-agnostic} & \multicolumn{3}{c|}{Semantic}& \multicolumn{3}{c|}{Class-agnostic} & \multicolumn{3}{c}{Semantic}\\ 
    \cline{2-19}
        & AP & AP$_{50}$ & AP$_{25}$ & AP & AP$_{50}$ & AP$_{25}$ & AP & AP$_{50}$ & AP$_{25}$ & AP & AP$_{50}$ & AP$_{25}$ & AP & AP$_{50}$ & AP$_{25}$ & AP & AP$_{50}$ & AP$_{25}$ \\ 
        \midrule
        semantic
        & 14.0 & 26.5 & 44.3
        & 16.0 & 29.0 & 42.6
        & 13.7 & 26.1 & 43.2
        & 9.1  & 15.8 & 25.5
        & 9.1  & 18.8 & 32.4
        & 11.5 & 24.4 & 38.6
        \\
        grid
        & 22.4 & 37.6 & 55.1
        & 13.0 & 21.2 & 28.9
        & 24.7 & 39.1 & 52.0
        & 6.6  & 10.0 & 12.5
        & 8.7  & 19.1 & 39.7
        & 6.4  & 14.4 & 27.0
        \\
        semantic + grid
        & 27.1  & 42.0 & 55.3   
        & 20.6  & 33.7 & 44.6   
        & 27.4  & 41.1 & 53.2
        & 13.5  & 21.3 & 29.1
        & 12.5  & 25.4 & 42.7
        & 12.3  & 25.8 & 40.0
        \\
    \bottomrule[0.7pt]
    \end{tabular}
}
\end{table*}

\textbf{Ablation study.} To validate the core designs in the proposed Stream3Dv2, in Table~\ref{abl}, we evaluate the individual and joint contributions of \emph{Multi-View Noise Mask Filtering (MVF)}, \emph{Semantic-Driven Fine-Grained 3D Segmentation (SDS)}, and \emph{Point Cloud Manifold Refinement (PMR)} across ScanNet200, ScanNet++, and MatterPort3D.
Comparing the baseline (I) with single-module variants (II-IV) reveals clear orthogonal benefits.
\emph{MVF} explores geometric consistency across multiple views to suppress early noise masks, boosting class-agnostic segmentation (\eg class-agnostic $\text{AP}_{50}$ from $28.7\%$ to $32.4\%$ on ScanNet200).
\emph{SDS} serves as the primary driver during the geometric-semantic fusion, markedly increasing semantic segmentation (\eg semantic $\text{AP}$ from $1.7\%$ to $12.8\%$ on ScanNet++).
\emph{PMR} refines point cloud boundaries based on non-Euclidean metrics, elevating 3D mask accuracy (\eg class-agnostic $\text{AP}_{25}$ from $18.4\%$ to $33.9\%$ on MatterPort3D).
Pairwise combinations (V-VII) show consistent performance gains over individual modules.
Integrating all components (VIII) achieves the highest scores across almost all metrics (\eg $27.1\%$ class-agnostic $\text{AP}$ and $20.6\%$ semantic $\text{AP}$ on ScanNet200), confirming the complementary nature of MVF, SDS, and PMR in bridging 2D priors with 3D instances.

\begin{table}[!t]
\caption{\textbf{Ablation study on mask localization strategies (MLS)} in local-to-historical streaming segmentation. The reported results are average time cost for each RGB-D frame (ms).}\label{mls}
\centering
\renewcommand\tabcolsep{4.0pt}
\resizebox{\linewidth}{!}{
    \begin{tabular}{l|c|c|c}
    \toprule[0.7pt]
     \multicolumn{1}{c|}{\textbf{MLS}} & \multicolumn{1}{c|}{\textbf{ScanNet200}} & \multicolumn{1}{c|}{\textbf{ScanNet++}} & \multicolumn{1}{c}{\textbf{MatterPort3D}}\\ 
        \midrule
        Normal  
        & 0.041  & 0.234 & 0.826
        \\
        Detection-based 
        & 0.038 ($\downarrow 7.3\%$)  & 0.196 ($\downarrow 16.2\%$) & 0.423 ($\downarrow 48.8\%$)
        \\
    \bottomrule[0.7pt]
    \end{tabular}
}
\end{table}

Table~\ref{prompt} investigates different prompting mechanisms in our local multi-view segmentation module.
Pure semantic prompts provide class-level guidance but suffer from lower spatial coverage on 2D images, yielding limited 3D segmentation for the whole scene (\eg only $14.0\%$ class-agnostic $\text{AP}$ on ScanNet200).
Conversely, dense grid prompts significantly enhance object identification on 2D images (\eg boosting class-agnostic $\text{AP}$ to $22.4\%$ on ScanNet200), yet sacrifice semantic discrimination (\eg semantic $\text{AP}$ drops from $16.0\%$ to $13.0\%$).
The hybrid semantic$+$grid design achieves the optimal balance, outperforming single-prompt baselines across all metrics (\eg achieving $27.1\%$ class-agnostic $\text{AP}$ and $20.6\%$ semantic $\text{AP}$ on ScanNet200).
This demonstrates that marrying dense geometry segmentation with explicit semantic priors on 2D images is essential for achieving robust 3D segmentation.

Table~\ref{mls} evaluates the average per-frame time cost of mask localization strategies (MLS) in our local-to-historical streaming segmentation module.
Compared to the baseline Normal MLS, our proposed Detection-based MLS substantially reduces processing latency, achieving speedups of $7.3\%$ on ScanNet200, $16.2\%$ on ScanNet++, and $48.8\%$ on MatterPort3D.
By leveraging compact 3D bounding box representations to simplify spatial association, our Detection-based MLS effectively eliminates redundant point-level computations during streaming segmentation.
These results demonstrate that our method achieves superior computational efficiency, making it highly suitable for near-real-time 3D streaming perception.

\textbf{Parameter analysis.}
In Fig.~\ref{para}, we analyze the four parameters in our streaming framework, \ie the number of frames $k$ in noise mask filtering, the rate $\gamma$ for downsampling key points, the geometry distance $\delta$ in point cloud manifold constraints, and the overlap threshold $\alpha$ in IoU-based key mask merging.
In Fig.~\ref{para}(a), we observe that the number of local frames $k$ is insensitive to our framework, and this could be the reason why our method performs well in both dense (ScanNet200 and ScanNet++) and sparse (MatterPort3D) frame conditions. We set $k=20$ by default across all benchmarks.
In Fig.~\ref{para}(b), the performance stays nearly constant across a wide range of sampling rates $\gamma \in [10^{-2}, 10^0]$, but experiences a sharp decline when $\gamma$ drops to $10^{-3}$ because overly sparse key points provide insufficient coverage. Our experiments empirically use $\gamma = 0.05$ to select key points.
In Fig.~\ref{para}(c), the manifold distance $\delta$ produces stable results between $0.03$ and $0.07$ on three datasets.
We fixed $\delta=0.05$ as the default setting and this parameter could vary with the density of the point cloud in practical applications.
In Fig.~\ref{para}(d), the overlap threshold $\alpha$ generally achieves stable results between $0.1$ and $0.3$ on different datasets.
By default, we fixed $\alpha=0.2$ in experiments.

\section{Conclusion}
This paper presents Stream3Dv2, a comprehensive training-free methodology for open-vocabulary streaming zero-shot 3D scene understanding, which substantially extends and elevates our preliminary work Stream3D.
To tackle the fundamental challenges of multi-view 2D noise masks, visual ambiguity, and point cloud boundary misalignment, Stream3Dv2 introduces a suite of newly engineered components: semantic-\&grid-prompted RGB image segmentation, semantic-driven mask merging\&partitioning, and manifold-aware graph metric-based point cloud manifold refinement.
Compared to previous Stream3D, these extensions yield consistent performance improvements, particularly in multi-view semantic consistency, geometric manifold refining, and open-vocabulary grounding.
Extensive evaluations across ScanNet200, ScanNet++, and MatterPort3D demonstrate that Stream3Dv2 not only far surpasses existing streaming baselines, but also achieves performance highly competitive with SOTA offline methods.

Furthermore, the newly added modules expand the applicability of Stream3Dv2 well beyond 3D instance segmentation, empowering diverse downstream applications such as 3D object detection and LLM-driven embodied agent interaction.
Despite these significant advancements, scaling streaming 3D perception to unconstrained, ultra-large outdoor environments with dynamic occlusions remains an open challenge.
Future research will explore adapting our Stream3Dv2 framework to outdoor scene understanding and further coupling it with physical embodied agents for real-time robotic manipulation.

{
\small
\bibliographystyle{IEEEtran}
\bibliography{main}
}

\onecolumn
\input{sec/Appendix}

\end{document}

%% file: sec/Appendix.tex
\section*{Appendix}
Due to the paper space limitations, this appendix provides more contents including algorithm implementation, complexity analysis, convergence analysis, and more experimental results.
\subsection{Algorithm Implementation}
This part introduces some specific implementation details for our algorithm Stream3Dv2, including Geometry-based single-view mask denoising, SCP-based multi-view noise mask filtering, and Closed-form solution for set partitioning.

\textbf{Geometry-based single-view mask denoising.}
We enforce two geometry-based distance constraints to refine each 3D mask within a single view, where $\delta$ is the distance threshold defined for the point cloud space.
In the objective of Eq.~(4), the first constraint is the \emph{intra-instance continuity} that ensures that each mask forms a spatially coherent manifold to avoid covering points from non-adjacent instances.
To implement the intra-instance continuity, we directly employ the density-based clustering method DBSCAN to filter out points that do not meet the constraint $\forall i \in {m}_a^t,\exists j \in {m}_a^t\ \vert\ dis(p_{(i)},p_{(j)}) < \delta$.
Specifically, we set the $neighborhood\_radius$ as $\delta$ and set $min\_samples$ $1$ using Euclidean metric.
After processing one mask, multiple masks might be obtained and we select the biggest one as the final refined mask.
Meanwhile, the second constraint is the \emph{inter-instance separation}, which mitigates the point cloud boundary noise caused by projection errors.
To implement the inter-instance separation, we use the point cloud data structure in the open3D library to achieve rapid calculation of the distance between points in two masks.
For points that do not satisfy the distance constraint $\forall i \in {m}_a^t,\forall j \in {m}_b^t\ \vert\ dis(p_{(i)},p_{(j)}) > \delta$ in the 3D point cloud space, we exclude them in their original mask.

\textbf{SCP-based multi-view noise mask filtering.}
To implement this, we solve the following set covering problem (SCP) to select the key mask set $\mathcal{M}$ from the class-agnostic mask set $\mathcal{C}^{(t-k+1):t}$:
\begin{equation}
\begin{aligned}
&\arg\min_{\mathcal{M} \subseteq \mathcal{C}^{(t-k+1):t}} |\mathcal{M}| \\ & s.t.~\forall p \in \mathcal{K}^{(t-k+1):t}, \exists m \in \mathcal{M}~\vert~p \in m, \nonumber
\end{aligned}
\end{equation}
where $m$ denotes a key mask in $\mathcal{M}$ and $p$ is the index of a key point from $\mathcal{K}^{(t-k+1):t}$.
Since the set covering problem is NP-hard, we adopt a greedy algorithm to approximate a solution as shown in the following Algorithm~\ref{algorithm}.
Specifically, we reorder the masks in $\mathcal{C}^{(t-k+1):t}$ from large to small by mask size, and then prioritize adding the mask that covers the most new points to $\mathcal{M}$ until all points in $\mathcal{K}^{(t-k+1):t}$ are covered.
As a result, our method can select necessary 3D masks for segmentation from a large set of redundant ones, which is conducive to filtering out noise masks across multiple views and reducing the chances of noise interference.
\begin{algorithm}[!h]
\caption{Greedy solving strategy for set-covering-problem based multi-view noise mask filtering}
\label{algorithm}
\begin{algorithmic}[1]
\STATE \textbf{Input:} mask set $\mathcal{C}^{(t-k+1):t}$, key point set $\mathcal{K}^{(t-k+1):t}$
\STATE \textbf{Output:} selected mask set $\mathcal{M}\subseteq \mathcal{C}^{(t-k+1):t}$ that covers all points in the key point set $\mathcal{K}^{(t-k+1):t}$
\STATE Compute mask sizes: for $m\in\mathcal{C}^{(t-k+1):t},s(m)=|m|$
\STATE Sort masks in descending size order: \\ $L \leftarrow \mathcal{C}^{(t-k+1):t}$ such that $|L(i)| \geq |L(j)|, i<j$
\STATE $\mathcal{M}\leftarrow\emptyset$
\STATE $U \leftarrow \mathcal{K}^{(t-k+1):t}$ /*set of uncovered points*/
\WHILE{$U \neq \emptyset$ and $L \neq \emptyset$}
  \FOR{each mask $m$ in $L$}
    \STATE compute marginal gain $g(m)\leftarrow |m \cap U|$
  \ENDFOR
  \STATE select $m^\ast \leftarrow \arg\max_{m\in L} g(m)$
  \IF{$g(m^\ast)=0$}
    \STATE break
  \ENDIF
  \STATE add $m^\ast$ to $\mathcal{M}$
  \STATE update uncovered set: $U \leftarrow U \setminus m^\ast$
  \STATE remove $m^\ast$ from $L$
\ENDWHILE
\STATE \Return $\mathcal{M}$
\end{algorithmic}
\end{algorithm}

\textbf{Closed-form solution for set partitioning.}
The constraints in Eq.~(7) separate across coarse-grained class-agnostic masks and fine-grained semantic masks, because the semantic masks $\{m_s^\Delta\}$ are mutually disjoint and the assignment $g^*(s)$ is deterministic.
Consequently, the objective $\mathcal{J}$ can be maximized by independently optimizing the contribution of each mask.
The globally optimal solution that maximizes $\mathcal{J}$ is therefore obtained in the following closed form:
\[
\begin{aligned}
z_g^* &=
\begin{cases}
1, & \text{if } |\mathcal{C}_g|>0,\ S_g \ge |m_g^\Delta|,
% 1, & \text{if } |\mathcal{C}_g|>0,\ \dfrac{S_g}{|m_g^\Delta|} \ge 1-\tau,\\[4pt]
   ~\text{and } S_g + \lambda|\mathcal{C}_g| \ge |m_g^\Delta|,\\[8pt]
0, & \text{otherwise},
\end{cases}\\[5pt]
w_s^* &=
\begin{cases}
1, & \text{if } g^*(s)=\text{None},\ m_s^\Delta \cap \big(\bigcup_g m_g^\Delta\big)=\emptyset,
   ~\text{and } \forall g\,(g^*(s)=g \Rightarrow z_g^*=0),\\[8pt]
0, & \text{otherwise}.
\end{cases}
\end{aligned}
\]
The condition for $z_g^*=1$ requires that (a) $m_g^\Delta$ possesses at least one assigned cropped part,
% the parts cover at least $1-\tau$ of its points, 
(b) the parts cover its original points (here, we can choose to apply the more relaxed condition ${S_g}/{|m_g^\Delta|} \ge 1-\tau$ to cover at least $1-\tau$ of points, \eg $\tau=0.2$ in practical experiments), 
and (c) the decomposition reward $S_g+\lambda|\mathcal{C}_g|$ is no less than that of retaining the original $|m_g^\Delta|$ points.
The condition for $w_s^*=1$ keeps a semantic mask as a novel instance only when it has zero overlap with any coarse-grained mask, and is not consumed by any decomposition.
This closed-form solution directly attains the global maximum of $\mathcal{J}$, as each $z_g^*$ selects the strictly better option under the coverage constraint, as well as each $w_s^*$ includes every eligible novel instance without conflict.

\subsection{Complexity Analysis}
First, we present the complexity analysis for the proposed manifold refinement (which is the most heavy part in our design), indicating that our method still exhibits a linear growth with respect to the number of points.
Then, we present an analysis of the computational advantages of using detection results (bounding boxes) for mask retrieval and instance location.

\textbf{Complexity analysis for manifold refinement.}
The computational cost of the proposed manifold refinement mainly comprises two stages:
(i) construction of the super‑point graph $\mathcal{G}$,
and (ii) iterative Bellman relaxation of the Eikonal equations.
We analyze each stage in terms of the number of input points
$n^{\Delta}=|\mathcal{I}^{\Delta}|$, the number of super‑points
$V=|\mathcal{V}|$, the number of undirected edges $E=|\mathcal{E}|$, and the
number of masks $C^{\Delta}$.

Specifically, in the super‑point graph construction, computing the average nearest-neighbour distance requires one $k$‑nearest‑neighbour query per point ($k=2$); and using a $3$‑dimensional \textsc{kd}‑tree this takes $O(n^{\Delta}\log n^{\Delta})$ time~\cite{friedman1977algorithm}. Partitioning the $n^{\Delta}$ points into voxels is performed in a single linear pass $O(n^{\Delta})$.
For each of the $V$ super‑points we compute the centroid and the $3\times 3$ covariance matrix in Eq.~(9) together with its inverse; and since every point contributes to exactly one super‑point, the total cost
of these operations is $O(n^{\Delta})$ (the $3\times 3$ matrix inversion adds only a constant factor per super‑point).
Building the edge set $\mathcal{E}$ requires a radius search around each super‑point centroid; and employing a spatial index (\eg a \textsc{kd}‑tree) on the $V$ centroids bounds this step by $O(V\log V)$.
Because the search radius is proportional to the local point spacing, the degree of each node remains bounded under the assumption of locally uniform sampling; and consequently the total number of edges $E$ is $O(V)$.
Computing the Mahalanobis edge weight in Eq.~(10) for each edge introduces an additional $O(E)$ arithmetic operations.

In iterative Bellman relaxation, one iteration over all $V$ super‑points evaluates the operation $\min_{u\in\mathcal{N}(v)}\{\phi_c^{(r-1)}(u)+d_{\mathcal{M}}(v,u)\}$ for each mask index $c$.
This requires aggregating information from the neighbours of every node, whose total count over all nodes equals $2E$.
Hence a single iteration for all $C^{\Delta}$ masks costs $O\bigl(C^{\Delta}\,E\bigr)$. With $R$ iterations account for $O\bigl(R\,C^{\Delta}\,E\bigr)$ operations.
In our experiments, $R$ is fixed to $5$ and thus the cost time is negligible compared to the initial nearest‑neighbour search.
After finished, assigning each originally point $i\in\mathcal{U}^{\Delta}$ to the mask of its enclosing super‑point is a direct lookup that runs in $O(n^{\Delta})$ time.

Summing above two stages, the total time complexity is
\begin{equation}
O\bigl(n^{\Delta}\log n^{\Delta} + V\log V + R\,C^{\Delta}\,E + n^{\Delta}\bigr). \nonumber
\end{equation}
Because $V\ll n^{\Delta}$ and $E=O(V)$, the term $O(n^{\Delta}\log n^{\Delta})$ dominates, which stems from the nearest‑neighbour distance estimation that is already required by the preceding merging steps.
Therefore, the manifold refinement adds only a small constant overhead to the complete processing pipeline.
The space complexity amounts to $O(n^{\Delta}+V+E + C^{\Delta}V)$, where the leading term $O(n^{\Delta})$ accounts for storing the original point cloud and the auxiliary structures occupy only a fraction of that memory.

\textbf{Efficiency for detection-based fast mask localization.}
In our \emph{local-to-historical 3D streaming instance segmentation}, we employ detection-based mask localization to achieve fast mask retrieval.
We give the following complexity analysis to illustrate the computational efficiency.

Specifically, let $n_h^t$ and $n_l^t$ be the numbers of masks in the historical 3D instance mask set $M^{1:t-k}$ and in the local 3D instance mask set $M^{(t-1+k):t}$ at the frame step $t$, respectively.
For the convenience of analysis, let $k_t$ denote the average number of points per mask at frame step $t$, and let $\alpha_t\in [0,1]$ denote the number proportion of the masks in $M^{(t-1+k):t}$ that actually have overlapping points with the masks in $M^{1:t-k}$ at frame step $t$.
For the normal mask localization strategy (\emph{a.k.a,} Normal MLS),
each mask pair in $M^{1:t-k} \times M^{(t-1+k):t}$ is tested by comparing every element of one list to every element of the other, costing $O(k_t^2)$ per pair,
therefore the time cost at frame step $t$ is
\begin{equation}
\begin{aligned}
O_{\text{normal}}(t)=O\big(n_h^t\cdot n_l^t\cdot k_t^2\big), \nonumber
\end{aligned}
\end{equation}
and the total cost over $T$ frames is
\begin{equation}
\begin{aligned}
O_{\text{normal}}({1:T})=\sum_{t=1}^{T}O\big(n^t_h\cdot n^t_l\cdot k_t^2\big). \nonumber
\end{aligned}
\end{equation}

For our detection-based mask localization strategy (\ie Detection-based MLS), each pair first incurs a 3D space intersection test at constant cost $O(1)$,
only the $\alpha_t\cdot n^t_h\cdot n^t_l$ surviving pairs undergo the naive $O(k_t^2)$ cost.
The cost at frame step $t$ is thus
\begin{equation}
\begin{aligned}
O_{\text{bbox}}(t)&=O\big(n^t_h\cdot n^t_l\cdot 1\big)\;+\;O\big(\alpha_t\cdot n^t_h\cdot n^t_l\cdot k_t^2\big)  \\
      &=O\big(n^t_h\cdot n^t_l + \alpha_t\cdot n^t_h\cdot n^t_l\cdot k_t^2\big), \nonumber
\end{aligned}
\end{equation}
and the cumulative cost over $T$ frames is
\begin{equation}
\begin{aligned}
O_{\text{bbox}}({1:T})=\sum_{t=1}^{T}O\big(n^t_h\cdot n^t_l + \alpha_t\cdot n^t_h\cdot n^t_l\cdot k_t^2\big). \nonumber
\end{aligned}
\end{equation}
If $\alpha_t\ll 1$ for most frames $t$, then $O_{\text{bbox}}({1:T})\approx \sum_{t=1}^TO(n^t_h\cdot n^t_l) \ll O_{\text{normal}}({1:T})$ and it yields an asymptotic improvement of roughly a factor $O(k_t^2)$ over the Normal MLS.
The per‑mask bounding box construction (min/max over $x,y,z$ in 3D space) costs $O(k_t)$ and could be counted as minor costs in dynamic scenarios.
As a result, the improvement of time efficiency by our method (with Detection-based MLS) becomes more significant, especially as the number of frames and masks increases.
We have also conducted ablation experiments to verify the effectiveness of our Detection-based MLS in the main paper.

\subsection{Convergence Analysis}
We prove that the iteration in Eq.~(12) converges in a finite number of steps to the exact shortest-path distance on the anisotropic graph in Eq.~(11).
The argument hinges on the monotonicity of the update and the Bellman-Ford principle~\cite{bellman1958routing} for shortest‑path computations on a finite graph.

\subsubsection{Preliminaries and notation}
For each mask index $c \in \{1,\dots,C^{\Delta}\}$, let $\mathcal{S}_c = \{ v \in \mathcal{V} \mid \mathcal{I}_v^{\Delta} \cap m_c \neq \emptyset \}$ be the set of source super‑points.
The \emph{true} geodesic distance from $\mathcal{S}_c$ under the anisotropic metric $d_{\mathcal{M}}$ is the function $\phi_c^* : \mathcal{V} \to \mathbb{R}_{\ge 0} \cup \{\infty\}$ defined by
\begin{equation}
\phi_c^*(v) = \inf_{\gamma \in \Gamma(v,\mathcal{S}_c)} \sum_{(u,w)\in\gamma} d_{\mathcal{M}}(u,w), \nonumber
\label{eq:geodesic}
\end{equation}
where $\Gamma(v,\mathcal{S}_c)$ denotes the set of all finite paths in $\mathcal{G}$ that connect $v$ to any node in $\mathcal{S}_c$ (with the convention $\inf \emptyset = \infty$).  It is straightforward to verify that $\boldsymbol{\phi}^* = (\phi_1^*,\dots,\phi_{C^{\Delta}}^*)$ satisfies Eq.~(11) and is the unique minimal solution.

The Bellman relaxation operator $F : \mathbb{R}_{\ge 0}^{C^{\Delta}\times V} \to \mathbb{R}_{\ge 0}^{C^{\Delta}\times V}$ that performs one complete pass of Eq.~(12) is defined component‑wise as
\begin{equation}\tag{A}
[F(\boldsymbol{\phi})]_c(v) =
\begin{cases}
0,            \;\;\;\;\;\;\;\;\;\;\;\;\;\;\;\;\;\;\;\;\;\;\;\;\;\;\;\;\;\;\;\;\;\;\;\;\;\;\;\;\;\;\;\;\;\;\;\;\;\;\;\;\;\;\;\;\;\;\;\; v \in \mathcal{S}_c,\\[8pt]
\begin{aligned}
\min\!\Bigl\{&\phi_c(v),\min_{u\in\mathcal{N}(v)} \bigl( \phi_c(u) + d_{\mathcal{M}}(v,u) \bigr) \Bigr\},\ v \notin \mathcal{S}_c.
\end{aligned}
\end{cases}
\end{equation}
The iteration is $\boldsymbol{\phi}^{(r+1)} = F(\boldsymbol{\phi}^{(r)})$, initialised with $\phi_c^{(0)}(v) = 0$ for $v \in \mathcal{S}_c$ and $+\infty$ elsewhere.

\subsubsection{Monotone convergence to the fixed point}
The space $\mathcal{X} = \mathbb{R}_{\ge 0}^{C^{\Delta}\times V}$ is
partially ordered by the component‑wise relation $\le$, and the operator
$F$ possesses two elementary properties that immediately yield convergence.

\begin{lemma}[Monotonicity]\label{lem:mono}
The Bellman relaxation operator $F$ is (a)~{order‑preserving}: $\boldsymbol{\phi} \le \boldsymbol{\psi} \;\Longrightarrow\; F(\boldsymbol{\phi}) \le F(\boldsymbol{\psi})$; and (b)~{deflationary}: $F(\boldsymbol{\phi}) \le \boldsymbol{\phi}$ for every $\boldsymbol{\phi} \in \mathcal{X}$.
\end{lemma}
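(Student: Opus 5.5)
The plan is to verify both properties directly from the component-wise definition (A) of $F$. I will split every check according to whether a node is a source, $v \in \mathcal{S}_c$, or not, $v \notin \mathcal{S}_c$. No earlier result beyond the definitions of $\mathcal{S}_c$, $\mathcal{N}(v)$ and $d_{\mathcal{M}}$ is needed.

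One convention must be fixed first. The iteration starts from $\phi_c^{(0)}(v)=+\infty$, so I will read $\mathcal{X}$ as $[0,\infty]^{C^{\Delta}\times V}$. The order $\le$ is extended to $+\infty$ in the usual way, with $a+\infty=\infty$ and $\min\{a,\infty\}=a$. I also use that every edge weight satisfies $d_{\mathcal{M}}(v,u)\ge 0$ and is finite. This holds because $d_{\mathcal{M}}$ in Eq.~(10) is the square root of a quadratic form in the matrix $(G_u+G_v)/2$. That matrix is positive definite, since each $G_v=\Sigma_v^{-1}$ is the inverse of a matrix regularised by $\varepsilon\mathbf{I}$ (or $G_v=\mathbf{I}$).

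For (a), fix $\boldsymbol{\phi}\le\boldsymbol{\psi}$, a mask index $c$ and a node $v$.
\begin{itemize}
\item If $v\in\mathcal{S}_c$, both $[F(\boldsymbol{\phi})]_c(v)$ and $[F(\boldsymbol{\psi})]_c(v)$ equal $0$, so the inequality is trivial.
\item If $v\notin\mathcal{S}_c$, then for each neighbour $u\in\mathcal{N}(v)$ we have $\phi_c(u)+d_{\mathcal{M}}(v,u)\le\psi_c(u)+d_{\mathcal{M}}(v,u)$, because adding a fixed finite constant preserves order on $[0,\infty]$. We also have $\phi_c(v)\le\psi_c(v)$.
\item The map $(x_0,x_1,\dots,x_m)\mapsto\min\{x_0,x_1,\dots,x_m\}$ is order-preserving in each argument. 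Applying it to the finitely many terms $x_0=\phi_c(v)$ and $x_j=\phi_c(u_j)+d_{\mathcal{M}}(v,u_j)$ therefore gives $[F(\boldsymbol{\phi})]_c(v)\le[F(\boldsymbol{\psi})]_c(v)$.
\end{itemize}
The neighbour set is finite because $\mathcal{G}$ is a finite graph. If $\mathcal{N}(v)=\emptyset$, the inner minimum is read as $+\infty$ and the same argument goes through.

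For (b), fix $\boldsymbol{\phi}\in\mathcal{X}$.
\begin{itemize}
\item If $v\in\mathcal{S}_c$, then $[F(\boldsymbol{\phi})]_c(v)=0\le\phi_c(v)$ by nonnegativity of the elements of $\mathcal{X}$.
\item If $v\notin\mathcal{S}_c$, the value $[F(\boldsymbol{\phi})]_c(v)$ is a minimum over a set that contains $\phi_c(v)$ itself, hence it is $\le\phi_c(v)$.
\end{itemize}
The operator acts on each $(c,v)$ component separately, so these per-component inequalities give $F(\boldsymbol{\phi})\le\boldsymbol{\phi}$ in the component-wise order.

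I expect no real obstacle. The only delicate point is bookkeeping.
\begin{itemize}
\item The statement writes $\mathcal{X}=\mathbb{R}_{\ge0}^{C^{\Delta}\times V}$, yet the iteration uses $+\infty$. I will state explicitly that the proof is carried out on the extended half-line, and that the arithmetic and order conventions there keep both the translation and the $\min$ monotone.
\item The source case of (b) relies on the nonnegativity of $\mathcal{X}$. This is the one place where the choice of domain matters; on all of $\mathbb{R}$ it would fail.
\end{itemize}
With (a) and (b) in hand, the next step in the paper should follow directly: the sequence $\boldsymbol{\phi}^{(r)}$ is non-increasing.
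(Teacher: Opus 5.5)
Your proposal is correct and follows the paper's argument. In both, (a) comes from the fact that adding a fixed edge weight and taking a minimum preserve order, and (b) comes from the non-source update being a minimum that includes the current value $\phi_c(v)$. Your explicit handling of source nodes (where $0 \le \phi_c(v)$ rests on nonnegativity) and of the $+\infty$ convention fills in details the paper leaves implicit.
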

\begin{proof}
Both properties follow directly from the above definition Eq.~(A): the update replaces the current value with the minimum of itself and a set of non‑negative candidates, therefore it never increases any component.
If $\boldsymbol{\phi} \le \boldsymbol{\psi}$, then for any neighbour $u$ we have $\phi_c(u) + d_{\mathcal{M}}(v,u) \le \psi_c(u) + d_{\mathcal{M}}(v,u)$, and taking minima preserves the inequality.
\end{proof}

\begin{lemma}[Lower bound]\label{lem:lower}
For every $r \ge 0$, $\boldsymbol{\phi}^{(r)} \ge \boldsymbol{\phi}^*$ component‑wise.
\end{lemma}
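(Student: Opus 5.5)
We argue by induction on $r$, using two facts: $\boldsymbol{\phi}^*$ is a fixed point of $F$, and $F$ is order-preserving (Lemma~\ref{lem:mono}(a)).

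\emph{Base case.} For $v\in\mathcal{S}_c$ we have $\phi_c^{(0)}(v)=0$. The trivial path from $v$ to itself lies in $\Gamma(v,\mathcal{S}_c)$ and has length $0$, so $\phi_c^*(v)=0\le\phi_c^{(0)}(v)$. For $v\notin\mathcal{S}_c$ we have $\phi_c^{(0)}(v)=+\infty\ge\phi_c^*(v)$. Hence $\boldsymbol{\phi}^{(0)}\ge\boldsymbol{\phi}^*$. Throughout we work in the extended order on $\mathbb{R}_{\ge0}\cup\{\infty\}$, with the usual conventions $\infty+a=\infty$ and $\min\{\infty,a\}=a$. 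We note that $F$ as written in Eq.~(A) extends verbatim to this setting, and the proof of Lemma~\ref{lem:mono} carries over unchanged.

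\emph{Fixed-point property.} We check that $F(\boldsymbol{\phi}^*)=\boldsymbol{\phi}^*$. For $v\in\mathcal{S}_c$, $[F(\boldsymbol{\phi}^*)]_c(v)=0=\phi_c^*(v)$. For $v\notin\mathcal{S}_c$ we need
\[
\phi_c^*(v)\le\min_{u\in\mathcal{N}(v)}\bigl(\phi_c^*(u)+d_{\mathcal{M}}(v,u)\bigr),
\]
which is the triangle inequality for path lengths. Take any path from $u$ to $\mathcal{S}_c$ and prepend the edge $(v,u)$. The result is a path from $v$ to $\mathcal{S}_c$ whose length is that of the original path plus $d_{\mathcal{M}}(v,u)$. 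Taking the infimum over paths from $u$ gives the inequality. The symmetry $d_{\mathcal{M}}(v,u)=d_{\mathcal{M}}(u,v)$ from Eq.~(10) lets us ignore edge orientation, and the case $\phi_c^*(u)=\infty$ holds trivially. Therefore
\[
[F(\boldsymbol{\phi}^*)]_c(v)=\min\Bigl\{\phi_c^*(v),\,\min_{u\in\mathcal{N}(v)}\bigl(\phi_c^*(u)+d_{\mathcal{M}}(v,u)\bigr)\Bigr\}=\phi_c^*(v).
\]
Only this inequality is needed, not the full Bellman equality, because the operator keeps $\phi_c(v)$ inside the minimum.

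\emph{Inductive step.} Suppose $\boldsymbol{\phi}^{(r)}\ge\boldsymbol{\phi}^*$. By Lemma~\ref{lem:mono}(a),
\[
\boldsymbol{\phi}^{(r+1)}=F(\boldsymbol{\phi}^{(r)})\ge F(\boldsymbol{\phi}^*)=\boldsymbol{\phi}^*,
\]
which completes the induction.

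The main care point is handling the value $+\infty$, since the domain stated for $F$ is $\mathbb{R}_{\ge0}^{C^{\Delta}\times V}$. We resolve it with the extended order described in the base case. Alternatively, one can argue directly: by induction, a finite value $\phi_c^{(r)}(v)$ always equals the length of some concrete path from $v$ to $\mathcal{S}_c$ (a new finite value arises only by appending an edge to a path realizing a neighbour's value). Since $\phi_c^*(v)$ is the infimum over all such paths, it is at most $\phi_c^{(r)}(v)$. This second route avoids the fixed-point computation entirely.
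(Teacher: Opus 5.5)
Your proof is correct and follows the same route as the paper: induction on $r$, with the inductive step resting on monotonicity of $\min$ and addition together with the fact that one relaxation pass leaves $\boldsymbol{\phi}^*$ unchanged. The paper writes this as a single chain of inequalities and takes the final equality from its unproved remark that $\boldsymbol{\phi}^*$ solves the discrete Eikonal system; you instead route the step through Lemma~\ref{lem:mono}(a) and $F(\boldsymbol{\phi}^*)=\boldsymbol{\phi}^*$, prove the required triangle inequality (only the inequality is needed, not the full Bellman equality), and treat the $+\infty$ values explicitly, so your version fills the small gaps the paper leaves.
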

\begin{proof}
We use induction on $r$ and the claim holds for $r=0$ by initialisation. Assume $\boldsymbol{\phi}^{(r)} \ge \boldsymbol{\phi}^*$.  For any non‑source node $v$, we have
\begin{align}\label{eq:lowerbound}
[F(\boldsymbol{\phi}^{(r)})]_c(v)
&= \min\!\Bigl\{ \phi_c^{(r)}(v),\; \min_{u\in\mathcal{N}(v)} \bigl( \phi_c^{(r)}(u) + d_{\mathcal{M}}(v,u) \bigr) \Bigr\} \nonumber \\
&\ge \min\!\Bigl\{ \phi_c^*(v),\; \min_{u\in\mathcal{N}(v)} \bigl( \phi_c^*(u) + d_{\mathcal{M}}(v,u) \bigr) \Bigr\} \nonumber \\ 
&= \phi_c^*(v), \nonumber
\end{align}
where the inequality uses the induction hypothesis and the fact that addition and $\min$ are monotone.  Source nodes equal $0$ for both sides.
\end{proof}

From Lemma~\ref{lem:mono} the sequence $\{\boldsymbol{\phi}^{(r)}\}$ is component‑wise non‑increasing, and from Lemma~\ref{lem:lower} it is bounded below by $\boldsymbol{\phi}^*$.
Hence the pointwise limit $\boldsymbol{\phi}^{(\infty)} = \lim_{r\to\infty} \boldsymbol{\phi}^{(r)}$ exists.
Because each $\phi_c(v)$ takes values in the finite set of path lengths from $\mathcal{S}_c$ (the graph is finite and edge weights are fixed), the decreasing sequence must become constant after a finite number of
steps.
Consequently, there exists $R_0$ such that for all $t \ge T_0$, $\boldsymbol{\phi}^{(r)} = \boldsymbol{\phi}^{(\infty)}$, and the limit satisfies $\boldsymbol{\phi}^{(\infty)} = F(\boldsymbol{\phi}^{(\infty)})$.

\begin{theorem}[Fixed‑point optimality]
\label{thm:fixed}
$\boldsymbol{\phi}^{(\infty)} = \boldsymbol{\phi}^*$.
\end{theorem}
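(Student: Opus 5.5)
By Lemma~\ref{lem:lower} we already have $\boldsymbol{\phi}^{(\infty)} \ge \boldsymbol{\phi}^*$ component-wise, so it remains to show the reverse inequality $\phi_c^{(\infty)}(v) \le \phi_c^*(v)$ for every $c$ and $v$. I plan to argue by induction on the number of edges of a path rather than through any uniqueness property of fixed points. This sidesteps the difficulty that the fixed-point equation alone does not pin down the minimal solution, since a constant function could also be a fixed point.

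Fix $c$ and define $\ell_c^{(j)}(v)$ as the minimal $d_{\mathcal{M}}$-length over paths from $v$ to $\mathcal{S}_c$ using at most $j$ edges, with value $\infty$ if no such path exists. The key claim is the Bellman--Ford invariant $\phi_c^{(j)}(v) \le \ell_c^{(j)}(v)$ for all $j \ge 0$, which I will prove by induction on $j$.

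\begin{itemize}
\item \emph{Base case $j=0$.} Here $\ell_c^{(0)}(v)=0$ exactly when $v\in\mathcal{S}_c$ and $\infty$ otherwise, which matches the initialisation.
\item \emph{Inductive step.} Take an optimal path with at most $j+1$ edges from a non-source node $v$, and let its first step go to a neighbour $u$. Then $\ell_c^{(j+1)}(v) = d_{\mathcal{M}}(v,u) + \ell_c^{(j)}(u)$. By the induction hypothesis and definition (A),
\[
\phi_c^{(j+1)}(v) \le \phi_c^{(j)}(u) + d_{\mathcal{M}}(v,u) \le \ell_c^{(j)}(u) + d_{\mathcal{M}}(v,u) = \ell_c^{(j+1)}(v).
\]
If instead the optimal path has no more than $j$ edges, the deflationary term $\phi_c^{(j)}(v)$ in (A) gives the bound. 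Source nodes stay at $0$.
\end{itemize}

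Next, because every edge weight satisfies $d_{\mathcal{M}}\ge 0$, any path can be shortened to a simple path without increasing its length by removing cycles. A simple path in the finite graph has at most $V-1$ edges, so $\ell_c^{(V-1)}(v) = \phi_c^*(v)$, including the case where $v$ is unreachable and both sides equal $\infty$.

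Finally, the sequence is monotone non-increasing by Lemma~\ref{lem:mono}. Therefore $\phi_c^{(\infty)} \le \phi_c^{(V-1)} \le \phi_c^*$. Combined with the lower bound from Lemma~\ref{lem:lower}, this gives equality. As a by-product, the argument shows the iteration stabilises after at most $V-1$ rounds, which makes the finite $R_0$ of the preceding paragraph explicit.

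The main obstacle is the step reducing to simple paths. It depends on the non-negativity of $d_{\mathcal{M}}$, which holds because $(G_u+G_v)/2$ is positive definite. It also requires careful handling of the $\infty$ arithmetic in the unreachable case.
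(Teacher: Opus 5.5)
Your proof is correct, but it takes a different route from the paper's. The paper argues structurally. It says the limit $\boldsymbol{\phi}^{(\infty)}$ is a fixed point of $F$, hence satisfies the discrete Eikonal system (11) with equality at every non-source node. It then concludes $\boldsymbol{\phi}^{(\infty)}=\boldsymbol{\phi}^*$ because that system's solution is unique.

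You instead prove the Bellman--Ford invariant $\phi_c^{(j)}\le\ell_c^{(j)}$, reduce to simple paths using $d_{\mathcal{M}}\ge 0$, and sandwich the limit with Lemma~\ref{lem:lower}. What your route buys:
\begin{itemize}
\item It is self-contained and uses only non-negativity of the edge weights.
\item It gives an explicit stabilisation bound: $V-1$ rounds, or the hop diameter $D$, since your invariant gives $\phi_c^{(r)}(v)=\phi_c^*(v)$ as soon as some optimal path from $v$ has at most $r$ edges.
\item In effect it is a rigorous version of the paper's later ``finite-step convergence'' paragraph, used here to prove the theorem itself.
\end{itemize}

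The paper's route is shorter, but it rests on two steps it does not justify.
\begin{itemize}
\item The fixed-point identity only yields $\phi_c^{(\infty)}(v)\le\min_{u\in\mathcal{N}(v)}\{\phi_c^{(\infty)}(u)+d_{\mathcal{M}}(v,u)\}$. The zero vector is a fixed point of $F$, as you anticipated. So the reverse inequality cannot come from deflation. It has to be obtained separately, e.g.\ from the fact that every finite iterate value at a non-source node is produced by a neighbour update.
\item Uniqueness of the Eikonal solution needs strictly positive edge weights. That holds here, since $(G_u+G_v)/2$ is positive definite and distinct voxels have distinct centroids, but the paper asserts it rather than proving it.
\end{itemize}
In fact the fixed-point inequality plus Lemma~\ref{lem:lower} would already suffice. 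Telescoping $\phi_c^{(\infty)}(v)\le\phi_c^{(\infty)}(u)+d_{\mathcal{M}}(v,u)$ along any path to $\mathcal{S}_c$ gives $\boldsymbol{\phi}^{(\infty)}\le\boldsymbol{\phi}^*$, so uniqueness is not needed.

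Two cosmetic points in your write-up. First, the case split in the inductive step is unnecessary: the decomposition $\ell_c^{(j+1)}(v)=d_{\mathcal{M}}(v,u)+\ell_c^{(j)}(u)$ holds for any optimal path with at least one edge. Second, you should state the trivial case $\ell_c^{(j+1)}(v)=\infty$ explicitly.
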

\begin{proof}
The fixed‑point condition $\boldsymbol{\phi}^{(\infty)} = F(\boldsymbol{\phi}^{(\infty)})$
implies for every non‑source $v$ and every mask index $c$ that
\[
\phi_c^{(\infty)}(v) = \min_{u\in\mathcal{N}(v)} \bigl\{ \phi_c^{(\infty)}(u) + d_{\mathcal{M}}(v,u) \bigr\},
\]
because the value $\phi_c^{(\infty)}(v)$ cannot be strictly smaller than the minimum over its neighbours (otherwise the deflationary update would have reduced it).  Together with $\phi_c^{(\infty)}(v)=0$ on $\mathcal{S}_c$,
this is precisely the discrete Eikonal system in Eq.~(11).  Since $\boldsymbol{\phi}^*$ is the unique solution of that system, we obtain $\boldsymbol{\phi}^{(\infty)} = \boldsymbol{\phi}^*$.
\end{proof}

\subsubsection{Finite‑step convergence via dynamic programming}
The above analysis guarantees eventual convergence; and the Bellman-Ford perspective~\cite{bellman1958routing} refines this to a concrete bound on the number of iterations.
Let $D$ be the \emph{hop diameter} of $\mathcal{G}$ with respect to the source sets, \ie the maximum number of edges on a simple shortest path from any node to any source.
A straightforward induction on the path length shows that after $r$ iterations, we have
\[
\phi_c^{(r)}(v) = \phi_c^*(v)
\]
for all $v$ whose geodesic path from $\mathcal{S}_c$ uses $\le r$ edges. Since no optimal path exceeds $D$ edges, $R \ge D$ iterations are sufficient for exact convergence of all distance fields.

In our point‑cloud pipeline, the super‑point graph $\mathcal{G}$ is exceptionally compact, leading to a diameter $D$ typically smaller than $\sim5$.
In experiments, we have the fixed choice $R = 5$ as a default setting for every processed scene.
Disconnected components are handled seamlessly: nodes unreachable from any source retain $\phi_c = \infty$ and are assigned the sentinel label $-1$, as prescribed by Eq.~(14).

\subsubsection{Remarks on iteration order}
The convergence proof does not rely on the order in which nodes are visited, and any permutation gives the same final result.
Our implementation uses the natural order induced by the voxel grid, which accelerates information propagation along spatial directions, but the theoretical guarantees remain unaffected.

\newpage

\subsection{More Experimental Results}
\begin{table*}[!h]
\caption{\textbf{Per-class segmentation performance comparison} (semantic AP$_{50}$ across the first 20 classes) on ScanNet++ benchmark.
}\label{table:pcpp}
\centering
\renewcommand\tabcolsep{3.0pt}
\resizebox{\linewidth}{!}{
    \begin{tabular}{l|cccccccccccccccccccc|c}
    \toprule[0.7pt]
    
    & \rotatebox{90}{door}   & \rotatebox{90}{table}     & \rotatebox{90}{cabinet} & \rotatebox{90}{ceiling lamp}   & \rotatebox{90}{curtain}   & \rotatebox{90}{chair}  & \rotatebox{90}{blinds}
    & \rotatebox{90}{storage cabinet}  & \rotatebox{90}{bookshelf}    & \rotatebox{90}{office chair}  & \rotatebox{90}{window} & \rotatebox{90}{whiteboard}    & \rotatebox{90}{monitor} & \rotatebox{90}{shelf}
    & \rotatebox{90}{window frame}  & \rotatebox{90}{pipe}    & \rotatebox{90}{box}  & \rotatebox{90}{heater}    & \rotatebox{90}{kitchen cabinet}    & \rotatebox{90}{storage rack} & \rotatebox{90}{\textbf{Avg.}}
    \\ 
    \midrule
        MaskClustering
        & 0.0  & 6.5 & 0.0
        & 0.8  & 0.0 & 22.2
        & 0.0  & 0.0 & 16.0
        & 1.3  & 0.0 & 22.2
        & 0.0  & 0.0 & 0.0
        & 0.0  & 0.0 & 0.0
        & 0.0  & 0.0 & 3.5
        \\
        Stream3D
        & 1.4  & 3.1  & 0.0
        & 0.0  & 11.1 & 27.6
        & 0.0  & 0.0  & 22.2
        & 4.3  & 1.7  & 22.2
        & 0.0  & 0.0  & 0.0
        & 0.0  & 0.0  & 0.0
        & 0.0  & 0.0  & 4.7
        \\
        Stream3Dv2
        & 56.2  & 26.4 & 17.0
        & 48.8  & 30.3 & 52.0
        & 32.2  & 8.9  & 53.5
        & 38.9  & 10.5 & 45.3
        & 39.8  & 23.4 & 8.3
        & 15.3  & 18.1 & 0.0
        & 13.9  & 0.0  & 26.9
        \\
        \midrule
        \textbf{Improve. $\uparrow$}
        & 54.8  & 19.9 & 17.0
        & 48.0  & 19.2 & 24.4
        & 32.2  & 8.9 & 31.3
        & 34.6  & 8.8 & 23.1
        & 39.8  & 23.4 & 8.3
        & 15.3  & 18.1 & 0.0
        & 13.9  & 0.0 & \textbf{22.2}
        \\
    \bottomrule[0.7pt]
    \end{tabular}
}
\end{table*}

\begin{table*}[!h]
\caption{\textbf{Per-class segmentation performance comparison} (semantic AP$_{50}$ across the first 20 classes) on MatterPort3D benchmark.
}\label{table:pcma}
\centering
\renewcommand\tabcolsep{3.0pt}
\resizebox{\linewidth}{!}{
    \begin{tabular}{l|cccccccccccccccccccc|c}
    \toprule[0.7pt]
    
    & \rotatebox{90}{door}   & \rotatebox{90}{picture}     & \rotatebox{90}{window} & \rotatebox{90}{chair}   & \rotatebox{90}{pillow}   & \rotatebox{90}{lamp}  & \rotatebox{90}{cabinet}
    & \rotatebox{90}{curtain}  & \rotatebox{90}{table}    & \rotatebox{90}{plant}  & \rotatebox{90}{mirror} & \rotatebox{90}{towel}    & \rotatebox{90}{sink} & \rotatebox{90}{shelves}
    & \rotatebox{90}{sofa}  & \rotatebox{90}{bed}    & \rotatebox{90}{night stand}  & \rotatebox{90}{toilet}    & \rotatebox{90}{column}    & \rotatebox{90}{banister} & \rotatebox{90}{\textbf{Avg.}}
    \\ 
    \midrule
        MaskClustering
        & 9.3   & 2.9  & 5.2
        & 24.9  & 16.7 & 4.3
        & 0.0   & 14.1 & 3.3
        & 14.8  & 3.1  & 33.3
        & 15.6  & 0.0  & 5.4
        & 22.5  & 8.9  & 11.2
        & 29.5  & 12.8 & 11.9
        \\
        Stream3D
        & 10.1  & 3.2  & 7.7
        & 26.0  & 15.8 & 6.7
        & 8.9   & 9.7  & 3.2
        & 19.4  & 5.7  & 24.7
        & 8.6   & 0.0  & 11.5
        & 31.7  & 15.3 & 38.6
        & 27.4  & 0.0  & 13.7
        \\
        Stream3Dv2
        & 19.6  & 15.3 & 27.0
        & 26.3  & 17.2 & 53.4
        & 18.9  & 56.5 & 7.3
        & 50.1  & 5.8  & 36.1
        & 28.1  & 2.4  & 30.9
        & 56.6  & 27.3 & 62.2
        & 51.1  & 50.8 & 32.1
        \\
        \midrule
        \textbf{Improve. $\uparrow$}
        & 9.5  & 12.1 & 19.3
        & 0.3  & 0.5 & 46.7
        & 10.0  & 42.4 & 4.0
        & 30.7  & 0.1 & 2.8
        & 12.5  & 2.4 & 19.4
        & 24.9  & 12.0 & 23.6
        & 21.6  & 38.0 & \textbf{18.4}
        \\
    \bottomrule[0.7pt]
    \end{tabular}
}
\end{table*}
\begin{figure*}[!h]
\centering
\includegraphics[width=1\linewidth]{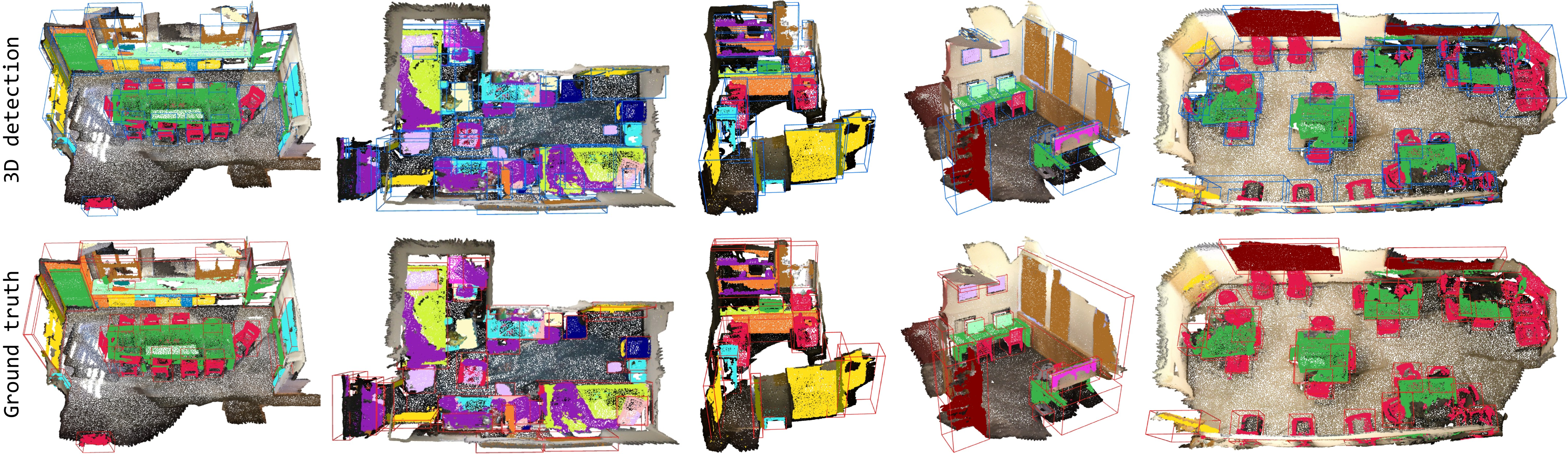}
\caption{More 3D object detection cases on ScanNet200 benchmark.}\label{fig:3DOD}
\end{figure*}
\begin{figure}[!t]
\centering
  \begin{subfigure}{0.3\linewidth}
    \includegraphics[width=\linewidth]{fig/C_Local_K.png}
    % \caption{Class-agnostic}
  \end{subfigure}
  \begin{subfigure}{0.3\linewidth}
    \includegraphics[width=\linewidth]{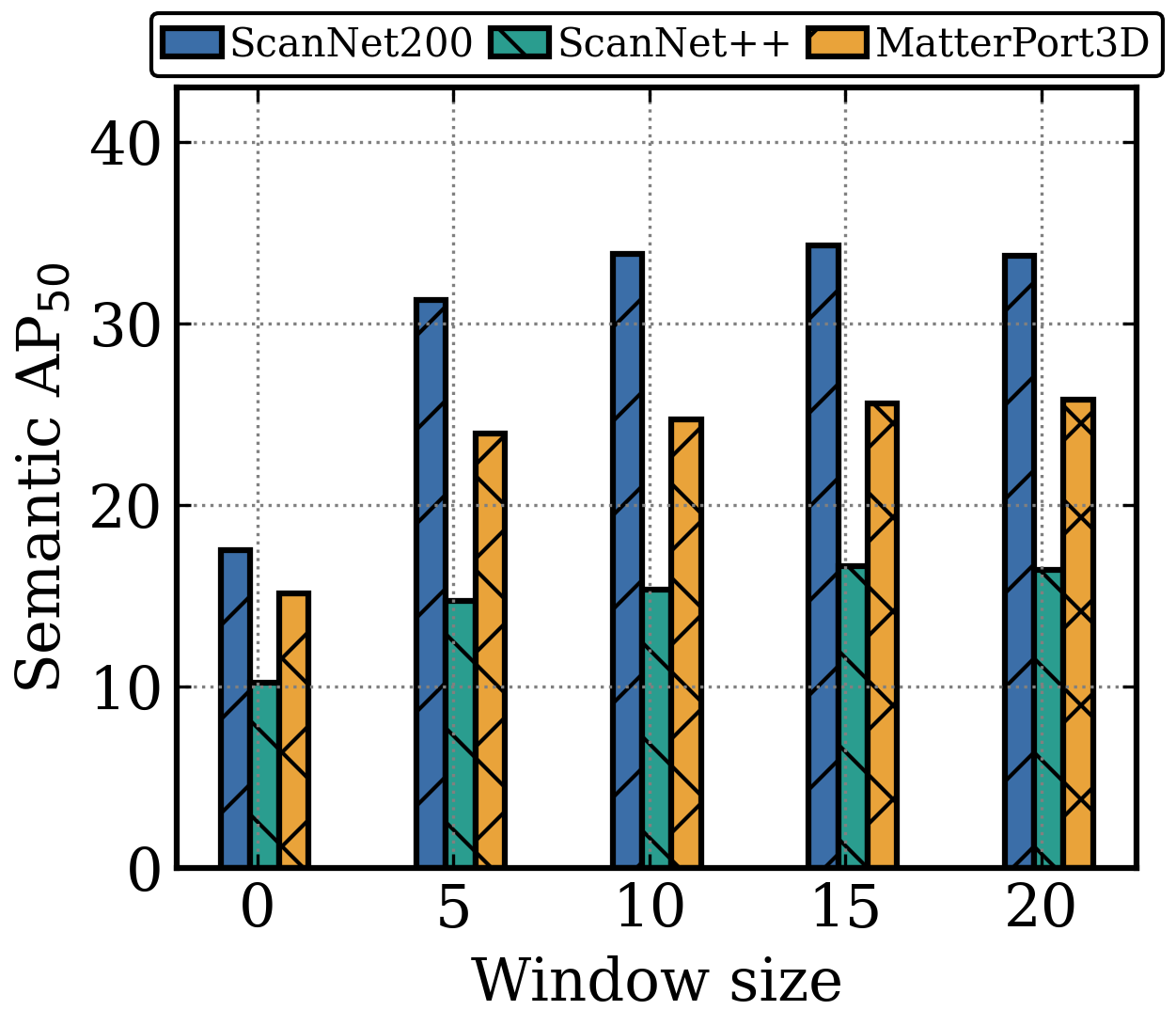}
    % \caption{Semantic}
  \end{subfigure}
\caption{Parameter analysis of the number of local frames $k$.}\label{frames}
\end{figure}
\begin{figure}[!t]
\centering
  \begin{subfigure}{0.3\linewidth}
    \includegraphics[width=\linewidth]{fig/C_delta.png}
    % \caption{Class-agnostic}
  \end{subfigure}
  \begin{subfigure}{0.3\linewidth}
    \includegraphics[width=\linewidth]{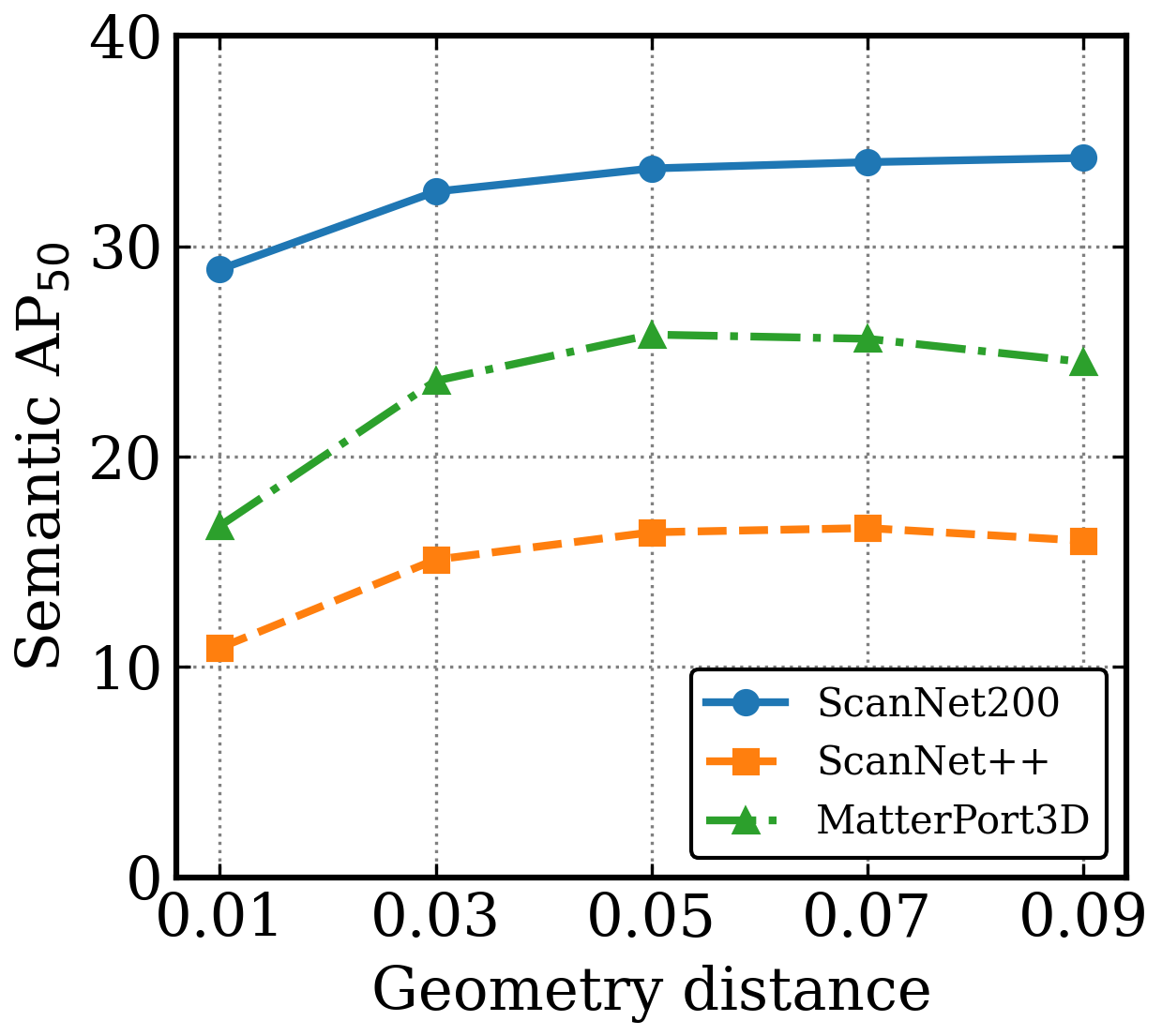}
    % \caption{Semantic}
  \end{subfigure}
\caption{Parameter analysis of the manifold distance $\delta$.}\label{manifold}
\end{figure}
\begin{figure}[!t]
\centering
  \begin{subfigure}{0.3\linewidth}
    \includegraphics[width=\linewidth]{fig/C_overlap.png}
    % \caption{Class-agnostic}
  \end{subfigure}
  \begin{subfigure}{0.3\linewidth}
    \includegraphics[width=\linewidth]{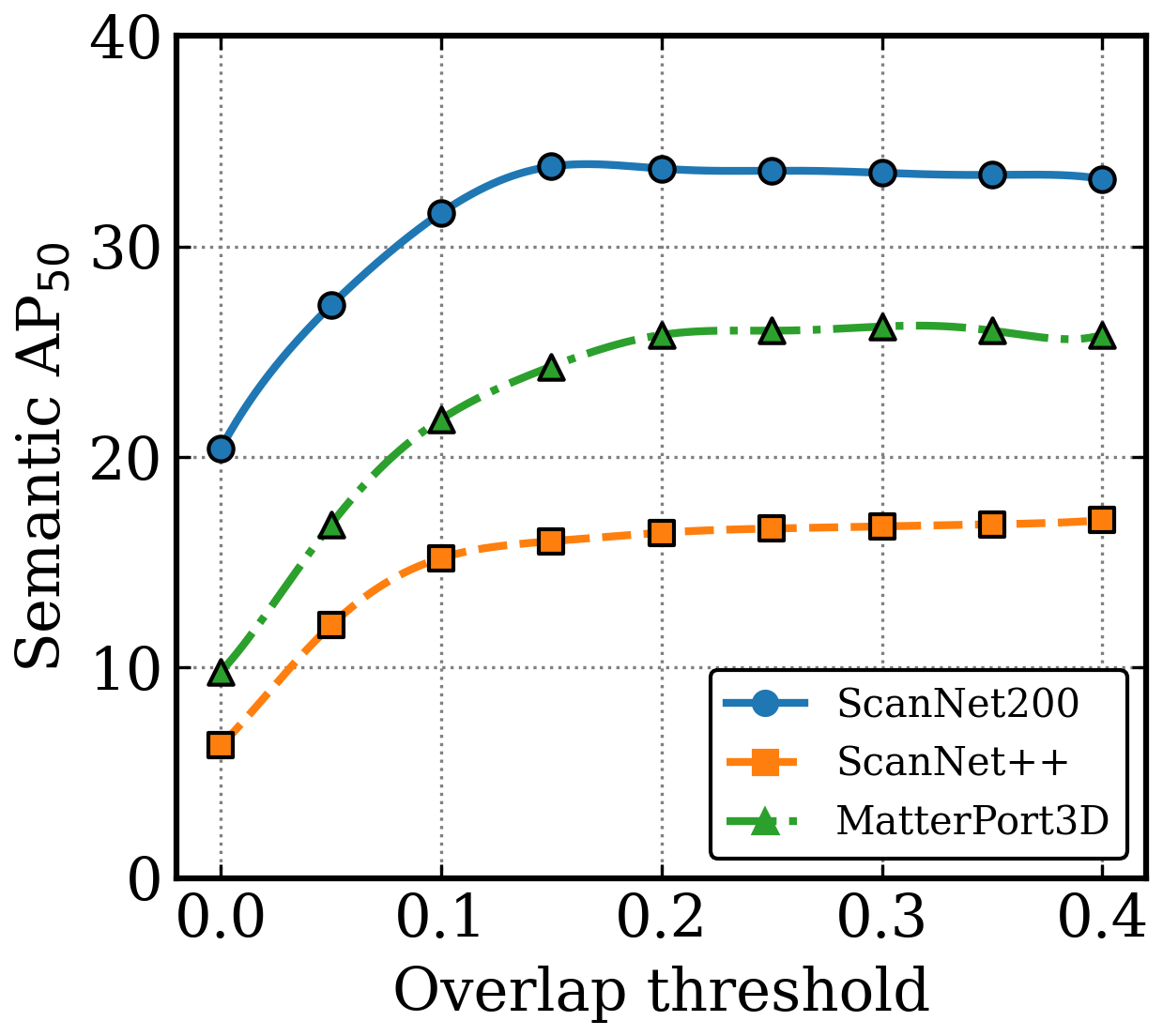}
    % \caption{Semantic}
  \end{subfigure}
\caption{Parameter analysis of the overlap threshold $\alpha$.}\label{overlap}
\end{figure}
Tables~\ref{table:pcpp} and \ref{table:pcma} show the per-class segmentation performance comparison evaluated by semantic AP$_{50}$ across the first 20 classes on ScanNet++ and MatterPort3D benchmarks, respectively.
Fig.~\ref{fig:3DOD} presents more 3D object detection results.
Fig.~\ref{frames}, \ref{manifold}, \ref{overlap} show three parameters' analysis evaluated by both class-agnostic and semantic AP$_{50}$ metrics.